\documentclass{article}

\usepackage[nonatbib,preprint]{neurips_2019}

\usepackage[numbers]{natbib}

\usepackage{microtype}
\usepackage{graphicx}
\usepackage{subcaption}
\usepackage{booktabs} 
\usepackage{hyperref}


\usepackage{amsmath}
\usepackage{amssymb}
\usepackage{mathtools}
\usepackage{amsthm}
\usepackage{enumitem}
\usepackage{multirow}

\usepackage[capitalize,noabbrev]{cleveref}

\theoremstyle{plain}
\newtheorem{theorem}{Theorem}[section]
\newtheorem{proposition}[theorem]{Proposition}
\newtheorem{lemma}[theorem]{Lemma}
\newtheorem{corollary}[theorem]{Corollary}
\theoremstyle{definition}

\theoremstyle{remark}


\DeclareMathOperator*{\argmin}{arg\,min}

\providecommand{\Paren}[1]{\ensuremath{\left( #1 \right)}}

\providecommand{\cL}{\mathcal{L}}
\def\tr{^\top}
\def\cpm{\!\pm\!}

\providecommand{\norm}[1]{\lVert#1\rVert}

\newcounter{tablerownumbers}

\newcommand{\eps}{\varepsilon}

\def\m#1{\ensuremath{\mathtt{#1}}}

\def\mI{{\m I}}

\begin{document}

\title{Two Tales of Single-Phase Contrastive Hebbian Learning}

\author{%
  Rasmus Kj\ae{}r H\o{}ier and Christopher Zach  \\
  Chalmers University of Technology\\
  Gothenburg, Sweden \\
  \texttt{\{hier,zach\}@chalmers.se}
}

\maketitle

\begin{abstract}
  The search for "biologically plausible" learning algorithms
  has converged on the idea of representing gradients as activity
  differences. However, most approaches require a high degree of
  synchronization (distinct phases during learning) and introduce substantial
  computational overhead, which raises doubts regarding their biological
  plausibility as well as their potential utility for neuromorphic
  computing. Furthermore, they commonly rely on applying infinitesimal
  perturbations (nudges) to output units, which is impractical in noisy
  environments. Recently it has been shown that by modelling artificial
  neurons as dyads with two oppositely nudged compartments, it is possible for
  a fully local learning algorithm named ``dual propagation'' to bridge the performance gap to
  backpropagation, without requiring separate learning phases or infinitesimal nudging.
  However, the algorithm has the drawback that its numerical stability 
  relies on symmetric nudging, which may be restrictive in biological
  and analog implementations.
  In this work we first provide a solid foundation for the objective underlying the dual propagation method, which also reveals a surprising connection with adversarial robustness.
  Second, we demonstrate how dual propagation is related to a particular adjoint state method, which is stable regardless of asymmetric nudging.
\end{abstract}

\section{Introduction}
\label{intro}

Credit assignment using fully local alternatives to back-propagation is interesting both as potential models of biological learning as well as for their applicability for energy efficient analog neuromorphic computing~\citep{kendall2020building, yi2022memristorcrossbars}.
A pervasive idea in this field is the idea of representing error signals via activity differences, referred to as NGRAD (Neural Gradient Representation by Activity Differences) approaches~\citep{lillicrap2020NGRAD}.
However, a commonly overlooked issue in NGRAD approaches is the requirement of applying infinitesimal perturbations (or \emph{nudging}) to output neurons in order to propagate error information through the network. This is problematic as analog and biological neural networks are inherently noisy, potentially causing the error signal to vanish if insufficient nudging is applied.
In many local learning methods the output units are \emph{positively} nudged to reduce a target loss, but utilizing additional negative nudging (output units increase a target loss) can be beneficial to improve accuracy (e.g.~\citep{laborieux2021scaling}).

The vanishing error signal problem is addressed by coupled learning~\citep{stern2021supervised}, which proposes to replace the clamped output units of contrastive Hebbian learning with a convex combination of the label and the free phase outputs.
Unfortunately, coupled learning has been shown to perform worse than equilibrium propagation on CIFAR10 and CIFAR100~\citep{scellier2023energybased}, and it does not necessarily approximate gradient descent on the output loss function~\citep{stern2021supervised}.
Holomorphic equilibrium propagation~\citep{laborieux2022holomorphic,laborieux2023improvingholomorphic} mitigates the need for infinitesimal nudging required in standard equilibrium propagation~\citep{scellier2017} at the cost of introducing complex-valued parameters.
Whether this is a suitable approach for either biological or analog neural networks is an open question.
Dual propagation~\cite{hoier2023conf} (DP), an algorithm similar in spirit to contrastive Hebbian learning, equilibrium propagation and coupled learning, is compatible with non-infinitesimal nudging by default. This method infers two sets of oppositely nudged and mutually tethered states simultaneously. 
However, utilization of symmetric nudging is a necessary condition for the convergence of its inference step.

\paragraph{Contributions} DP is compatible with strong feedback and only requires a single inference phase, which are appealing features with regards to biological plausibility and potential applications to analog neuromorphic computing. However, the lack of convergence guarantees in the case of asymmetric nudging is clearly unsettling as exact symmetry is hard to realize outside of digital computers.
Further,---unlike digital computers---neuromorphic, analog or otherwise highly distributed computing hardware typically performs continuous computations and runs asynchronously. Consequently, numerical stability of an energy-based inference and learning method is of essential importance.
For this reason we derive an improved variant of dual propagation, which overcomes this strict symmetry requirement. In summary the contributions of this work are:
\begin{itemize}[nosep,left=0pt]
    \item A derivation of DP based on repeated relaxations of the optimal value reformulation~\cite{outrata1988note,dempe2013bilevel}.
    \item A new Lagrangian based derivation of dual propagation, which recovers the original dual propagation algorithm in the case of symmetric nudging, but leads to a slightly altered (and more robust) method in the case of asymmetric nudging.
    \item We experimentally investigate the robustness of these algorithms to asymmetric nudging and strong feedback, and we further demonstrate the impact of asymmetric nudging on the estimated Lipschitz constant.
\end{itemize}

\section{Related Work}
\paragraph{CHL, EP and lifted networks}
In contrastive Hebbian learning (CHL)~\citep{movellan1991, xie2003} and equilibrium propagation (EP)~\citep{scellier2017} neuronal activations are found via an energy minimization procedure. Inference is carried out twice, once with and once without injecting label information at the output layer. While CHL clamps the output units to the true targets, EP applies nudging towards a lower loss. The difference between the activity in each of these two inference phases is used to represent neuronal error vectors. 
To speed up convergence and ensure that inferred states represent the same local energy basin, this is typically (but not always) done sequentially, e.g.\ the second inference phase is initialized with the solution found in the first phase.
A better gradient estimate can be obtained by introducing an additional oppositely nudged inference phase, yielding a three-phased algorithm~\cite{laborieux2021scaling}.

\begin{figure}
     \centering
     \begin{subfigure}[b]{0.42\textwidth}
         \centering
         \includegraphics[width=\textwidth]{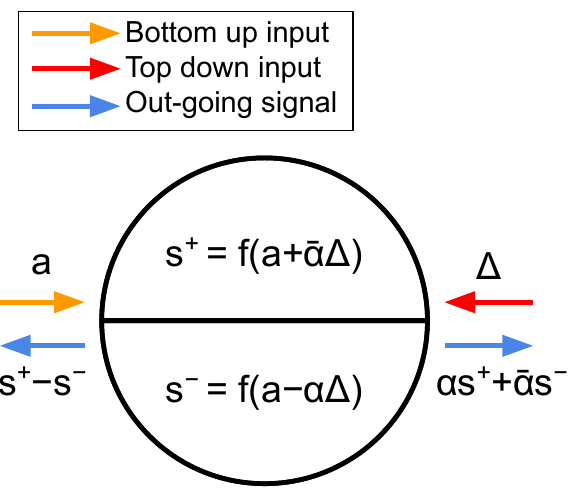}
         \caption{Sketch of a dyadic DP neuron.}
         \label{fig:dyadicneuron}
     \end{subfigure}~~~~
     \begin{subfigure}[b]{0.30\textwidth}
         \centering
         \includegraphics[width=0.9\textwidth]{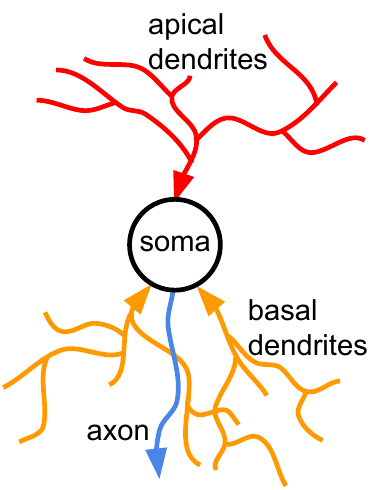}
         \caption{Sketch of a pyramidal neuron.}
         \label{fig:pyramidalneuron}
     \end{subfigure}
        \caption{\textbf{(a)} Illustration of a dyadic neuron (note that all quantities are scalar). The two internal states, $s^+$ and $s^-$, receive the same bottom-up input $a$ but the top down input $\Delta$ nudges them in opposite directions. The difference and weighted mean of these internal states are then propagated downstream and upstream respectively. \textbf{(b)} In a pyramidal neuron bottom-up signal arrive at the basal dendrites and top-down signal arrive at the apical dendrites. Concerns regarding DP and biological plausibility are discussed in section~\ref{sec:bioplausibility}.}
        \label{fig:dyadic_and_pyramidal_neuron}
\end{figure}

Dual propagation~\cite{hoier2023conf} (DP) is another algorithm in the NGRAD family, in which each neuron has two intrinsic states corresponding to positively and negatively nudged compartments as illustrated in Fig.~\ref{fig:dyadic_and_pyramidal_neuron}.
The algorithm is based on a specific network potential and uses the respective stationarity conditions to obtain an easy-to-implement inference method:
each neuron maintains two internal states representing the neural activity as (weighted) mean and the error signal as difference, respectively (which makes it an instance of an NGRAD method).
The mean state is sent ``upstream'' to the next layer while the difference is passed downstream to the preceding layer, where it nudges the receiving neural state.
The method essentially interleaves or ``braids'' the two inference phases and makes it possible to infer both states simultaneously. When the update sequence is chosen appropriately, as few as two updates per neuron are sufficient, making the algorithm comparable to back-propagation in terms of runtime and 10-100X faster than CHL and EP.
In practice, DP is applied to feed forward networks and EP typically to Hopfield models, which has implications for how inference is carried out (fixed-point method vs energy minimization) and on the hardware the algorithms are suitable for.
Another difference to EP and CHL is, that dual propagation infers both sets of states simultaneously (and not in sequential phases).
The underlying network potential used in DP is parametrized by a coefficient $\alpha\!\in\![0,1]$ determining the weighted mean.
The stability of the resulting inference method hinges on choosing $\alpha=1/2$.

Casting deep learning as an optimization task over explicit activations and weights is the focus of a diverse set of back-propagation alternatives sometimes collectively referred to as \emph{lifted neural networks}~\citep{carreira2014distributed,askari2018lifted,gu2020fenchellifted,li2020,choromanska2019beyond,zach2019,hoier2020}.
Predictive coding networks (e.g.~\citep{whittington2017approximation,salvatori2023brain}) can be also understood as instances of lifted neural networks.
Although members of this group have different origins, they are algorithmically closely related to CHL and EP~\citep{zach2021}, but vary e.g.\ in their suitability for digital hardware.

\paragraph{Weak and strong feedback}
While a number of CHL-inspired learning methods for neural networks are shown to be equivalent to back-propagation when the feedback parameter $\beta$ approaches zero (i.e.\ infinitesimal nudging takes place, as discussed in e.g.~\citep{xie2003,scellier2017,zach2019,zach2021}), practical implementations use a finite but small value for $\beta$, whose magnitude is further limited---either explicitly or implicitly. CHL implicitly introducing weak feedback via its layer discounting in order to approximate a feed-forward neural network, and both CHL and EP rely on weak feedback to stay in the same energy basin for the free and the clamped solutions. The iterative solver suggested for the LPOM model~\citep{li2020} also depends on sufficiently weak feedback to ensure convergence of the fixed-point inference scheme. In contrast to these restrictions, the feedback parameter $\beta$ in dual propagation is weakly constrained and its main effect is to influence the resulting finite difference approximation for activation function derivatives.

\section{Background}

A feed-forward network is a composition of $L$ layer computations
$s_{k-1} \mapsto f_k(W_{k-1}s_{k-1})$, where $f_k$ is the activation function
of layer $k$ and $W_{k-1}$ is the associated weight matrix. $s_0$ is the input
provided to the network and $s_k:= f_k(W_{k-1}s_{k-1})$ is the vector of
neural states in layer $k$. The set of weight matrices
$\theta=\{W_k\}_{k=1}^{L-1}$ form the trainable parameters, which are
optimized during the training stage to minimize a target loss $\ell$. Training
is usually---at least on digitial hardware---conducted via back-propagation of
errors, which is a specific instance of the chain rule.

In~\cite{hoier2023conf} a localized, derivative-free approach for supervised
learning in feed-forward neural networks is proposed, which is based on the following
network potential $\cL^{DP}_\alpha$ (Eq.~\ref{eq:L_DP}) for a parameter
$\alpha\in[0,1]$,
\begin{align}
  \label{eq:L_DP}
  \begin{split}
    \cL^{DP}_{\alpha}(\theta) = \min_{s^+} \max_{s^-} \alpha\ell(s_L^+) + \bar{\alpha}\ell(s_L^-)
    {} + \tfrac{1}{\beta}\sum\nolimits_{k=1}^L \Paren{ E_k(s_k^+, \Bar{s}_{k-1}) - E_k(s_k^-, \Bar{s}_{k-1}) }.
  \end{split}
\end{align}
The terms $E_k$ in $\cL^{DP}_\alpha$ are specifically chosen as
$E_k(s_k,s_{k-1}) = G_k(s_k) - s_k\tr W_{k-1} s_{k-1}$, where $G_k$ is
typically a strongly convex function for the ``resting'' energy (and relates to the activation function via $\nabla G_k=f_k^{-1}$ and $\nabla G_k^*=f_k$). This choice
leads---via the corresponding stationarity conditions---to particularly simple
fixed-point updates for $s_k^\pm$ for the inference of neural states,
\begin{align}
  \label{eq:old_DP_updates}
  \begin{split}
    s_k^+ &\gets f_k\!\Paren{ W_{k-1} \bar s_{k-1} + \alpha W_k\tr (s_{k+1}^+- s_{k+1}^-) } \\
    s_k^- &\gets f_k\!\Paren{ W_{k-1} \bar s_{k-1} - \bar\alpha W_k\tr (s_{k+1}^+- s_{k+1}^-) },
  \end{split}
\end{align}
where $\bar\alpha := 1-\alpha$ and
$\bar s_k := \alpha s_k^+ + \bar\alpha s_k^-$.
The state of the output units is determined by solving
\begin{align}
\begin{split}
    s_L^+ &\gets \arg\min_{s_L} \alpha\ell(s_L) + E_L(s_L, \bar s_{L-1}) \\
    s_L^- &\gets \arg\min_{s_L} -\bar\alpha\ell(s_L) + E_L(s_L, \bar s_{L-1}).
\end{split}
\label{eq:zL_updates_DP}
\end{align}
Gradient-based learning of the
weight matrices $\theta = (W_k)_{k=0}^{L-1}$ is based on the quantity
$\partial_{W_k} \cL^{DP}_\alpha \propto (s_{k+1}^+-s_{k+1}^-) \bar s_k\tr$ once the neural states $s_k^\pm$ are inferred.

The choice of $\alpha=1$ is equivalent to the LPOM formulation~\cite{li2020},
that builds on the following network potential,
\begin{align}
  \begin{split}
    \cL^{LPOM}(\theta) = \min_{s^+} \ell(s_L^+)
    + \tfrac{1}{\beta} \sum_{k=1}^L \Paren{ E_k(s_k^+, s_{k-1}^+) - E_k(f_k(W_{k-1}s_{k-1}^+), s_{k-1}^+) }.
  \end{split}
  \label{eq:LPOM}
\end{align}
Eq.~\ref{eq:LPOM} can be obtained from Eq.~\ref{eq:L_DP} by closed form
maximization w.r.t.\ all $s_k^-$, as $s_k^-$ in this setting solely depends on
$s_{k-1}^+$. Analogously, the choice $\alpha=0$ can be interpreted as
``inverted'' LPOM model (after minimizing w.r.t.\ all $s_k^+$
variables). Unfortunately, the straightforward update rules in
Eq.~\ref{eq:old_DP_updates} lead to a stable method only when
$\alpha=1/2$. In Section~\ref{sec:dampened_fixed_point} we discuss the necessary adaptation of Eq.~\ref{eq:old_DP_updates}
(which are also used in our experiments) when $\alpha\in\{0,1\}$.
In this work we (i) provide explanations why the choice $\alpha=1/2$ is
special (via connecting it to the adjoint state method in Section~\ref{sec:lagrangian} and via a more explicit analysis in Section~\ref{sec:fixed_point_analysis}) and (ii) demonstrate why selecting $\alpha\ne 1/2$ can be beneficial.

\section{A Relaxation Perspective on Dual Propagation}
\label{sec:relaxation}

In this section we show how $\cL^{DP}_\alpha$ can be obtained exactly by
repeated application of certain relaxations of the optimal value
reformulation~\cite{outrata1988note,dempe2013bilevel} (summarized in Section~\ref{sec:ROVR}). Obtaining
$\cL^{DP}_\alpha$ requires the application of two (to our knowledge) novel relaxations for
bilevel programs as described in Sections~\ref{sec:SPROVR} and~\ref{sec:AROVR}.

\subsection{The Optimal Value Reformulation and its Relaxation}
\label{sec:ROVR}

First we describe a basic (and known) reformulation for a class of bilevel
minmization problems. We consider a general bilevel program of the form
\begin{align}
  \min_\theta \ell(s^*) \qquad \text{s.t. } s^* = \arg\min_s E(s;\theta)
  \label{eq:bilevel}
\end{align}
In the context of this work we assume that the minimizer of
$E(s;\theta)$ is unique for all $\theta$, which corresponds to layer
computation in deep neural networks to be proper (single-valued) functions.
In a first step the constraint in Eq.~\ref{eq:bilevel} can restated as
\begin{align}
  \min_{\theta,s} \ell(s) \qquad \text{s.t. } E(s;\theta) \le \min_{s'} E(s';\theta),
\end{align}
which is the \emph{optimal value
  reformulation}~\cite{outrata1988note,dempe2013bilevel}. As
$E(s;\theta) \ge \min_{s'} E(s';\theta)$ by construction, the inequality
constraint is always (possibly only weakly) active.  By fixing the corresponding
non-negative Lagrange multiplier to $1/\beta$ for a $\beta>0$ we obtain a
\emph{relaxed optimal value reformulation} (``ROVR''),
\begin{align}
  \begin{split}
     \min_{\theta,s} \ell(s) + \tfrac{1}{\beta} \Paren{ E(s;\theta) - \min_{s'} E(s';\theta) } 
    = \min_{\theta,s} \max_{s'} \ell(s) + \tfrac{1}{\beta} \big( E(s;\theta) - E(s';\theta) \big).
  \end{split}
  \label{eq:ROVR}
\end{align}
Repeated application of this conversion from lower-level minimization task to
higher-level penalizers on deeply nested problems yields a large variety of
known learning methods~\cite{zach2021}. When $\beta\to 0$, a number of works
have shown the connection between Eq.~\ref{eq:ROVR}, back-propagation and
implicit differentiation~\cite{xie2003,scellier2017,gould2019deep,zach2021}.

\subsection{A Saddlepoint Relaxed Optimal Value Reformulation}
\label{sec:SPROVR}

A different relaxation for a bilevel program can be obtained by rewriting
Eq.~\ref{eq:bilevel} as follows,
\begin{align}
  \label{eq:minmax_bilevel}
  \begin{split}
    \min_{s^+} \max_{s^-} \ell(\alpha s^+ \!+\! \bar\alpha s^-) \;\; \text{s.t. } s^+ &= \arg\min_s E(s;\theta) \\
    s^- &= \arg\min_s E(s;\theta),
  \end{split}
\end{align}
where $\alpha\in [0,1]$ and $\bar\alpha=1-\alpha$ as before. The reformulation
above replaces the solution $s^*$ of the inner problem with an (apparently
superfluous) convex combination of two related solutions $s^+$ and
$s^-$. Under the uniqueness assumption on the minimizer of
$E(\cdot;\theta)$, this program is equivalent to Eq.~\ref{eq:bilevel} (as
$s^*=s^+=s^-$ by construction). The main effect of introducing $s^\pm$ in
Eq.~\ref{eq:minmax_bilevel} is, that it allows two independent applications of
the ROVR. Applying the standard ROVR on $s^-$ (a maximization task) yields the
first relaxation,
\begin{align}
  \min_{s^+} \max_{s^-}{} &\ell(\alpha s^+\!+\!\bar\alpha s^-) - \tfrac{1}{\beta} (E(s^-;\theta) - \min\nolimits_s E(s;\theta) )
                         \qquad\text{s.t. } s^+ = \arg\min\nolimits_s E(s;\theta),
\end{align}
and a second application of the ROVR on $s^+$ (with the same multiplier
$1/\beta$) results in
\begin{align}
  \begin{split}
    J_{\alpha,\beta}^{\text{SPROVR}}(\theta) := \min_{s^+} &\max_{s^-} \ell(\alpha s^+\!+\!\bar\alpha s^-)
    + \tfrac{1}{\beta} (E(s^+;\theta) - E(s^-;\theta))
  \end{split}
  \label{eq:SPROVR}
\end{align}
as the $\min\nolimits_s E(s;\theta)$ terms cancel each other. We can reverse
the order of min and max in Eq.~\ref{eq:minmax_bilevel}, which also induces a
reversal of min and max in Eq.~\ref{eq:SPROVR}. Strong duality may not be
satisfied in general, but the stationarity conditions w.r.t.\ $s^+$ and
$s^-$ are unaffected by permuting min and max.
We call the algebraic conversion from Eq.~\ref{eq:bilevel} to Eq.~\ref{eq:SPROVR} the \emph{saddle-point ROVR or SPROVR}.

In order to model deep feed-forward networks we extend the bilevel program to
a multi-level program with with $L$ nesting levels,
\begin{align}
  \begin{split}
    \min_{\theta} \ell(s_L^*) \qquad \text{s.t. } s_k^* &= \arg\min\nolimits_{s_k} E_k(s_k,s_{k-1}^*;\theta) \\
    s_1^* &= \arg\min\nolimits_{s_1} E_1(s_1;\theta),
  \end{split}
  \label{eq:deeply}
\end{align}
where each $E_k(\cdot,s_{k-1};\theta)$ is assumed to have a unique minimizer
for all $s_{k-1}$ and $\theta$. By applying SPROVR repeatedly from
the outermost level to inner ones, we arrive at the following relaxed reformulation,
\begin{align}
  \begin{split}
    \min_\theta \min_{\{s_k^+\}} \max_{\{s_k^-\}} &\,\ell(\bar s_L) + \tfrac{1}{\beta} \Paren{ E_1(s_1^+;\theta) - E_1(s_1^-;\theta) } \\
    &+ \tfrac{1}{\beta} \sum\nolimits_{k=2}^L \Paren{ E_k(s_k^+;\bar s_{k-1};\theta) - E_k(s_k^-;\bar s_{k-1};\theta) }.
  \end{split}
\end{align}
We recall that the short-hand notation $\bar s_k$ stands for
$\bar s_k=\alpha s_k^+ + \bar\alpha s_k^-$, but the choice of the coefficient
$\alpha$ can in principle be layer-dependent (by replacing $\alpha$ with
$\alpha_k$). Similarly, the value of multiplier $1/\beta$ may depend on the
layer $k$. Traversing the SPROVR from the inside towards the outermost level
leads to a different (and less relevant) relaxation.

\subsection{An Adversarial Relaxed Optimal Value Reformulation}
\label{sec:AROVR}

In this section we present a different instance of an optimal value
reformulation, that yields centered approaches to contrastive Hebbian learning
and is further closely linked to adversarial training. We start with the
bilevel program,
\begin{align}
    J_{\alpha,\beta}^{\text{adv}}(\theta) = \ell(s^*) \quad \text{s.t. } s^* = \arg\min_s E(s;\theta) - \bar\alpha\beta \ell(s),
  \label{eq:adv_bilevel}
\end{align}
where $\beta>0$ is a step size parameter and $\alpha\in[0,1]$ further modulates the step length. The solution $s^*$ is a minimizer of an adversarially perturbed inner problem (and assumed to exist), and therefore Eq.~\ref{eq:adv_bilevel} is generally not equivalent to Eq.~\ref{eq:bilevel}.
A simple illustrative example is given by setting $E(s;\theta) = \norm{s-\theta}^2/2$ and $\ell(s)=g\tr s$, then $\arg\min_s E(s;\theta)=\theta$ but $s^*$ in Eq.~\ref{eq:adv_bilevel} is given by $s^* = \theta + \bar\alpha\beta g$, i.e. $s^*$ takes a gradient ascent step from $\theta=\arg\min_s E(s;\theta)$ to increase the (linear) target loss $\ell$ (somewhat in analogy with the fast gradient method). On the other hand, the outer problem aims to reduce the main loss $\ell$ for this perturbed sample $s^*$.

Using the optimal value reformulation (cf.\ Section~\ref{sec:ROVR}) this is equivalent to an inequality constrained program,
\begin{align}
  \begin{split}
     \min_{s^*} \ell(s^*)
     &\;\text{s.t. } E(s^*;\theta) - \bar\alpha\beta\ell(s^*) \le \min_s E(s;\theta) - \bar\alpha\beta\ell(s).
  \end{split}
\end{align}
Fixing the (non-negative) multiplier of the Lagrangian relaxation to $1/\beta$ yields (after algebraic manipulations)
\begin{align}
\begin{split}
    J_{\alpha,\beta}^{\text{AROVR}}(\theta) := \min_{s^+} &\max_{s^-}{} \alpha\ell(s^+) + \bar\alpha\ell(s^-)
    + \tfrac{1}{\beta} \Paren{ E(s^+;\theta) - E(s^-;\theta) },
\end{split}
\end{align}
which e.g.\ generalizes the centered variant of EP. Increasing the effective step length $\bar\alpha\beta$ in $J_{\alpha,\beta}^{\text{adv}}$ by increasing $\bar\alpha$ allows larger adversarial perturbations and therefore makes the learning task harder. This property carries over to $J_{\alpha,\beta}^{\text{AROVR}}$ as shown in the following proposition (proven in the appendix).
\begin{proposition}
  \label{thm:adv_CHL}
  Let $0 < \beta \le \beta'$ and $\alpha, \alpha'\in[0,1]$ with
  $\alpha\le\alpha'$. Then the following holds: (a)
  $J_{\alpha',\beta}^{\text{AROVR}}(\theta) \le
  J_{\alpha,\beta}^{\text{AROVR}}(\theta)$ and (b)
  $\beta J_{\alpha,\beta}^{\text{AROVR}}(\theta) \le \beta'
  J_{\alpha,\beta'}^{\text{AROVR}}(\theta)$.
\end{proposition}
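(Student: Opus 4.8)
The plan is to exploit the additive separability of the AROVR objective. Because the integrand contains no term coupling $s^+$ and $s^-$, the inner $\min$--$\max$ decouples into two independent scalar value functions, $J_{\alpha,\beta}^{\text{AROVR}}(\theta) = P(\alpha,\beta) + N(\alpha,\beta)$, where $P(\alpha,\beta) := \min_{s^+}\Paren{\alpha\ell(s^+) + \tfrac1\beta E(s^+;\theta)}$ and $N(\alpha,\beta) := \max_{s^-}\Paren{\bar\alpha\ell(s^-) - \tfrac1\beta E(s^-;\theta)}$. The whole argument then reduces to tracking how $P$ and $N$ respond to the parameters, using only non-negativity of the loss ($\ell\ge 0$) and the elementary fact that $\min$ and $\max$ preserve pointwise inequalities between objectives.

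I would dispatch part (b) first, as it is routine. Rescaling by $\beta$ absorbs the denominator, giving $\beta P(\alpha,\beta) = \min_{s^+}\Paren{\alpha\beta\,\ell(s^+) + E(s^+;\theta)}$ and $\beta N(\alpha,\beta) = \max_{s^-}\Paren{\bar\alpha\beta\,\ell(s^-) - E(s^-;\theta)}$, so that $\beta J_{\alpha,\beta}^{\text{AROVR}} = \beta P + \beta N$. Since $\ell\ge 0$ and $\alpha,\bar\alpha\ge 0$, replacing $\beta$ by $\beta'\ge\beta$ increases each bracketed objective pointwise in its state variable; monotonicity of $\min$ and $\max$ under pointwise domination then makes both $\beta P$ and $\beta N$ non-decreasing in $\beta$, and adding them yields (b).

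The substantive step is (a), and the obstacle is that $P$ and $N$ move in opposite directions as $\alpha$ grows: the same pointwise comparison shows $\alpha\mapsto P(\alpha,\beta)$ is non-decreasing while $\alpha\mapsto N(\alpha,\beta)$ is non-increasing, so their sum cannot be ordered termwise. To break the tie I would introduce the single value function $\phi(t) := \min_{s}\Paren{\tfrac1\beta E(s;\theta) + t\,\ell(s)}$, which is concave in $t$ as a pointwise infimum of affine functions of $t$. Rewriting the maximization as a negated minimization and using $\bar\alpha = 1-\alpha$ gives $P(\alpha,\beta) = \phi(\alpha)$ and $N(\alpha,\beta) = -\phi(\alpha-1)$, hence $J_{\alpha,\beta}^{\text{AROVR}} = \phi(\alpha) - \phi(\alpha-1)$---a difference of $\phi$ across a window of fixed length one. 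Concavity forces the slope of $\phi$ to be non-increasing, which makes the increment $\phi(\alpha)-\phi(\alpha-1)$ monotone in $\alpha$ and thereby produces the ordering asserted in (a); the remaining work is only to fix the direction of this monotonicity carefully and to record that, under the excerpt's standing assumption that the (possibly adversarially perturbed) inner problems attain their unique optima, $\phi$, $P$ and $N$ are finite so that the concavity/envelope reasoning is legitimate.
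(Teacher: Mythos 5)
Your separation of the objective into the two independent value problems is correct, and the reduction $J_{\alpha,\beta}^{\text{AROVR}} = \phi(\alpha) - \phi(\alpha-1)$ with $\phi(t) := \min_s \bigl(t\,\ell(s) + \tfrac{1}{\beta}E(s;\theta)\bigr)$ concave is a clean repackaging of the same separability the paper exploits (the paper instead works with the four minimizers $s_\alpha, s_{-\bar\alpha}, s_{\alpha'}, s_{-\bar\alpha'}$, an exchange lemma saying that a larger coefficient on $\ell$ yields a smaller value of $\ell$ at the minimizer, and a term-by-term swapping chain). Your argument for (b) is valid but silently strengthens the hypotheses: the pointwise-domination step needs $\ell \ge 0$, which the proposition does not assume and which the paper's own chain-of-inequalities argument is structured to avoid. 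You should either record $\ell\ge 0$ as an added assumption (it does hold for the squared-error and cross-entropy losses used in the paper) or switch to the paper's exchange-lemma route for that part.

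The genuine gap is in (a), exactly at the step you deferred (``fix the direction of this monotonicity carefully''): when you fix it, it comes out the wrong way. Concavity of $\phi$ means secant slopes are non-increasing to the right, so for $\alpha\le\alpha'$ the unit-length increments satisfy $\phi(\alpha)-\phi(\alpha-1) \ge \phi(\alpha')-\phi(\alpha'-1)$, i.e.\ your argument yields $J_{\alpha,\beta}^{\text{AROVR}} \ge J_{\alpha',\beta}^{\text{AROVR}}$ --- the reverse of the printed claim. This is not a defect of your method that more care will repair: taking $\ell(s)=\tfrac12(s-y)^2$, $E(s)=\tfrac12 s^2$, $\beta=\tfrac12$ gives $J_0 = y^2$, $J_{1/2}=\tfrac{8}{15}y^2$, $J_1=\tfrac13 y^2$, so $J$ is strictly decreasing in $\alpha$ here. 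The decreasing direction is also the one the surrounding discussion describes (increasing $\bar\alpha$ enlarges the adversarial step and ``makes the learning task harder''), and the paper's own proof of (a) hinges at its middle step on the inequality $(\alpha-\alpha')\ell(s_{\alpha'}) \le (\alpha-\alpha')\ell(s_{-\bar\alpha})$, which has the same sign problem since both factors $\alpha-\alpha'$ and $\ell(s_{\alpha'})-\ell(s_{-\bar\alpha})$ are non-positive. So your proposal cannot be completed into a proof of (a) as stated; what it does prove cleanly is the reversed inequality, and you should flag that discrepancy rather than paper over it.
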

Original EP ($\alpha=1$) and centered EP~\cite{laborieux2021scaling,scellier2023energybased} ($\alpha=1/2$) are two
particular instances of $J_{\alpha,\beta}^{\text{AROVR}}$. Hence,
Prop.~\ref{thm:adv_CHL} raises the question of how much the better gradient
estimate and how much the stricter loss $J_{1/2,\beta}^{\text{AROVR}}$
contribute to the advantages of centered~EP over standard EP.

Without further assumption we cannot expect claim (b) to be strengthened to
$J_{\alpha,\beta}^{\text{AROVR}}(\theta) \le
J_{\alpha,\beta'}^{\text{AROVR}}(\theta)$ as a larger adversarial step is offset
by a smaller multiplier $1/\beta'$. If the total step size
$\bar\alpha \beta$ remains constant, then a stronger relation can nevertheless be shown:
\begin{proposition}
  \label{thm:adv_CHL_2}
  Let $0 < \beta' < \beta$ and $\alpha'$ such that
  $\bar\alpha\beta=\bar\alpha'\beta'$. Then
  $J_{\alpha,\beta}^{\text{AROVR}}(\theta) \le
  J_{\alpha',\beta'}^{\text{AROVR}}(\theta)$.
\end{proposition}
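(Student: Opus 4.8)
The plan is to exploit the fact that, unlike the deep multi-level potential, the objective defining $J_{\alpha,\beta}^{\text{AROVR}}$ is \emph{separable}: its integrand is a sum of a term in $s^+$ and a term in $s^-$, so the inner $\min_{s^+}$ and $\max_{s^-}$ decouple. First I would rewrite
\[
  J_{\alpha,\beta}^{\text{AROVR}}(\theta) = \min_{s^+}\Bigl[\alpha\ell(s^+)+\tfrac1\beta E(s^+;\theta)\Bigr] + \max_{s^-}\Bigl[\bar\alpha\ell(s^-)-\tfrac1\beta E(s^-;\theta)\Bigr].
\]
Pulling the factor $1/\beta$ out of each bracket and introducing the scalar value function $\phi(\gamma):=\min_s\bigl[E(s;\theta)+\gamma\ell(s)\bigr]$, the maximization becomes $-\tfrac1\beta\min_{s^-}[E(s^-;\theta)-\bar\alpha\beta\ell(s^-)]=-\tfrac1\beta\phi(-\bar\alpha\beta)$ while the minimization becomes $\tfrac1\beta\phi(\alpha\beta)$, so that
\[
  J_{\alpha,\beta}^{\text{AROVR}}(\theta) = \frac{\phi(\alpha\beta)-\phi(-\bar\alpha\beta)}{\beta}.
\]

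The crucial observation is that this is exactly the slope of the secant of $\phi$ over the interval with endpoints $-\bar\alpha\beta$ and $\alpha\beta$, whose length is $\alpha\beta-(-\bar\alpha\beta)=(\alpha+\bar\alpha)\beta=\beta$, matching the denominator. Moreover $\phi$ is concave, being a pointwise infimum of the affine-in-$\gamma$ maps $\gamma\mapsto E(s;\theta)+\gamma\ell(s)$.

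Next I would compare the two secants using the constraint $\bar\alpha\beta=\bar\alpha'\beta'$. This identity says precisely that the two secants share the same left endpoint $a:=-\bar\alpha\beta=-\bar\alpha'\beta'$. For the right endpoints, rewriting $\bar\alpha\beta=\bar\alpha'\beta'$ as $\beta-\alpha\beta=\beta'-\alpha'\beta'$ gives $\alpha\beta-\alpha'\beta'=\beta-\beta'>0$, so $u:=\alpha'\beta'<\alpha\beta=:v$. Moreover $u-a=\alpha'\beta'+\bar\alpha'\beta'=\beta'>0$ and $v-a=\beta>0$, hence $a<u<v$ and both secants are well defined with $J_{\alpha',\beta'}^{\text{AROVR}}=(\phi(u)-\phi(a))/(u-a)$ and $J_{\alpha,\beta}^{\text{AROVR}}=(\phi(v)-\phi(a))/(v-a)$.

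The claim then follows from the standard monotonicity of secant slopes of a concave function: for $\phi$ concave and $a<u<v$, the chord slope measured from the fixed point $a$ is non-increasing in the right endpoint, i.e.
\[
  \frac{\phi(u)-\phi(a)}{u-a} \ge \frac{\phi(v)-\phi(a)}{v-a},
\]
which is exactly $J_{\alpha',\beta'}^{\text{AROVR}}(\theta)\ge J_{\alpha,\beta}^{\text{AROVR}}(\theta)$. I expect the main obstacle to be conceptual rather than technical, namely recognizing the separable reduction to the single concave value function $\phi$ and identifying $J^{\text{AROVR}}$ as a secant slope; once that representation is in place, the endpoint bookkeeping and the concavity lemma are routine. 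A minor point to verify is that $\phi$ is finite at the relevant arguments (guaranteed by the assumed existence of the adversarial minimizers), so that the secant slopes are genuinely well defined.
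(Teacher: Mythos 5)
Your proof is correct, and it takes a genuinely different (and arguably cleaner) route than the paper's. The paper proves this directly at the level of minimizers: it introduces the tilted energy $\tilde U_{-\gamma}(s) := -\gamma\ell(s)+E(s;\theta)$ with $\gamma=\bar\alpha\beta=\bar\alpha'\beta'$, rewrites $\alpha\ell(s)+\tfrac1\beta E(s;\theta)=\ell(s)+\tfrac1\beta\tilde U_{-\gamma}(s)$, and then chains three elementary inequalities (minimality of $s_{\alpha\beta}$, nonnegativity of $\tilde U_{-\gamma}(s_{\alpha'\beta'})-\tilde U_{-\gamma}(s_{-\gamma})$ combined with $\tfrac1\beta<\tfrac1{\beta'}$, and minimality of $s_{-\gamma}$). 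Your argument instead packages the whole objective as the secant slope $J_{\alpha,\beta}^{\text{AROVR}}=\bigl(\phi(\alpha\beta)-\phi(-\bar\alpha\beta)\bigr)/\beta$ of the concave value function $\phi(\gamma)=\min_s\bigl[E(s;\theta)+\gamma\ell(s)\bigr]$, observes that the constraint $\bar\alpha\beta=\bar\alpha'\beta'$ fixes the left endpoint while $\beta>\beta'$ pushes the right endpoint further out, and invokes chord-slope monotonicity for concave functions. The two are mathematically the same mechanism — the paper's middle inequality is exactly an unrolled, pointwise proof of the chord lemma at the three relevant arguments, and its $s_{-\gamma}$ is your shared anchor $a$ — but your formulation buys a genuine conceptual gain: it makes explicit that $J^{\text{AROVR}}$ is a finite-difference approximation of $\phi'$ over an interval of width $\beta$ anchored at $-\bar\alpha\beta$, from which Proposition 4.1(a) and (b) also drop out as instances of the same secant-slope monotonicity (moving one or both endpoints of the chord). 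The only points worth stating explicitly, which you already flagged or which follow from the paper's standing assumptions, are that the relevant minimizers exist so $\phi$ is finite at the four arguments, and that $u-a=\beta'>0$ and $v-a=\beta>0$ so both chords are nondegenerate; neither requires $\alpha'\ge 0$, so your argument is in fact marginally more general than the stated hypotheses.
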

The two relaxations, $J_{\alpha,\beta}^{\text{AROVR}}$ and
$J_{\alpha,\beta}^{\text{SPROVR}}$ (Eq.~\ref{eq:SPROVR}) look very similar,
the only difference being the location of the averaging step (averaging the
argument or the function value of the target loss $\ell$). Generally we expect
$J_{\alpha,\beta}^{\text{AROVR}}$ to upper bound
$J_{\alpha,\beta}^{\text{SPROVR}}$, which is always the case when
$\ell$ is convex. The huge advantage of SPROVR is that it leads to tractable
objectives when applied to deeply nested problems (linear in the problem depth
$L$). Applying AROVR in a similar manner on Eq.~\ref{eq:deeply} yields an exponential
number of terms in the resulting objective unless $\alpha=1$ or
$\alpha=0$ (where it actually coincides with SPROVR). We now can easily
identify $\cL_\alpha^{DP}$ (Eq.~\ref{eq:L_DP}) as result of applying both
SPROVR and AROVR on a deeply nested program:
\begin{corollary}
  The DP objective $\cL_\alpha^{DP}$ (Eq.~\ref{eq:L_DP}) can be obtained by
  first applying AROVR on the outermost level of Eq.~\ref{eq:deeply}, followed
  by subsequent applications of SPROVR for the remaining levels.
\end{corollary}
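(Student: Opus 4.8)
The statement is a bookkeeping identity: it asserts that a specific scheduling of the two conversions introduced in \cref{sec:SPROVR,sec:AROVR}, applied level by level to \cref{eq:deeply}, reproduces \cref{eq:L_DP} verbatim. The plan is therefore to carry out the conversions in the prescribed order and track, at each level $k$, which loss and energy terms are generated and how the lower-level optimal state is substituted. I would organize this as a finite induction over the $L$ nesting levels, starting from the outermost.

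\emph{Base step (outermost level, AROVR).} Treating $s_{L-1}^*$ as a fixed argument, the outermost level of \cref{eq:deeply} is the single-level bilevel program $\min \ell(s_L^*)$ subject to $s_L^* = \arg\min_{s_L} E_L(s_L,s_{L-1}^*)$. Applying the AROVR conversion of \cref{sec:AROVR} to it (perturbing the inner energy by $-\bar\alpha\beta\ell$, passing to the optimal value reformulation, and fixing the multiplier to $1/\beta$) yields exactly $\min_{s_L^+}\max_{s_L^-} \alpha\ell(s_L^+) + \bar\alpha\ell(s_L^-) + \tfrac{1}{\beta}\Paren{E_L(s_L^+,s_{L-1}^*) - E_L(s_L^-,s_{L-1}^*)}$, with no residual constant since the $\min_s E_L$ term is absorbed into the $\max_{s_L^-}$ as in \cref{sec:AROVR}. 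This already produces the two loss terms and the $k=L$ summand of \cref{eq:L_DP}, the only discrepancy being that $s_{L-1}^*$ rather than $\bar s_{L-1}$ still appears in both energy evaluations.

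\emph{Inductive step (levels $k=L-1,\dots,1$, SPROVR).} The key observation is that SPROVR's doubling argument is indifferent to the form of the objective sitting above level $k$: it only requires that the level-$k$ optimal state $s_k^*$ feed into that objective, possibly through several terms. I would double the constraint $s_k^*=\arg\min_{s_k} E_k(s_k,s_{k-1}^*)$ into two identical copies for $s_k^+$ and $s_k^-$; uniqueness of the minimizer forces $s_k^+=s_k^-=s_k^*$, so every occurrence of $s_k^*$ in the current objective---here the two energies $E_{k+1}(s_{k+1}^+,s_k^*)$ and $E_{k+1}(s_{k+1}^-,s_k^*)$ inherited from level $k+1$---may be replaced by $\bar s_k=\alpha s_k^+ + \bar\alpha s_k^-$ without changing the value. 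Relaxing each doubled constraint via the ROVR (one as a minimization, one as a maximization, with common multiplier $1/\beta$) contributes the summand $\tfrac{1}{\beta}\Paren{E_k(s_k^+,s_{k-1}^*) - E_k(s_k^-,s_{k-1}^*)}$, the $\min_s E_k$ terms again cancelling. Carrying this cascade down to $k=1$, where $E_1$ has no lower state, converts every $s_k^*$ into $\bar s_k$ and accumulates the full sum $\tfrac{1}{\beta}\sum_{k=1}^L\Paren{E_k(s_k^+,\bar s_{k-1}) - E_k(s_k^-,\bar s_{k-1})}$. Collecting the loss terms from the base step then gives \cref{eq:L_DP} exactly.

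\emph{Main obstacle.} The delicate point is justifying the inductive step cleanly, since after the base step the objective above level $L-1$ is a saddle value $\min_{s_L^+}\max_{s_L^-}\cdots$ in which $s_{L-1}^*$ occurs in two places rather than inside a plain $\ell(\cdot)$. I would address this by noting that the doubling-plus-ROVR construction of \cref{sec:SPROVR} never uses the specific form of the upper objective---only that $s_k^*$ is its argument---so the same substitution $s_k^*\mapsto\bar s_k$ applies simultaneously to all of its occurrences. The remaining concern is the nesting order of the various $\min_{s^+}/\max_{s^-}$ operators introduced at different levels; here I would invoke the remark of \cref{sec:SPROVR} that min and max may be interchanged without affecting the stationarity conditions, which are what the DP updates \cref{eq:old_DP_updates} ultimately rely on, so that writing the result in the aggregated $\min_{\{s_k^+\}}\max_{\{s_k^-\}}$ form of \cref{eq:L_DP} is legitimate.
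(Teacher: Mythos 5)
Your proposal is correct and follows essentially the same route the paper intends: the paper offers no separate proof of this corollary, treating it as immediate from the constructions in Sections~\ref{sec:SPROVR} and~\ref{sec:AROVR}, and your level-by-level bookkeeping (AROVR once at the top, then the SPROVR cascade of Section~\ref{sec:SPROVR} replacing each $s_k^*$ by $\bar s_k$ and contributing the energy-difference summands) is exactly the intended argument. Your explicit handling of the two subtleties the paper leaves implicit---that SPROVR is agnostic to the form of the upper objective, and that the min/max nesting order is resolved by the stationarity remark of Section~\ref{sec:SPROVR}---is a faithful elaboration rather than a different approach.
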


In Section~\ref{sec:exp_strong_feedback} we verify whether different choices for $\alpha$ have any impact on the robustness of a trained model. Since any differences are expected to vanish for small choices of $\beta$, the large $\beta$ setting (we choose $\beta=1/2$) is of main interest.
Note that unlike standard adversarial training (which perturbs solely the input to the network), the lifted network potential $\cL_\alpha^{DP}$ is based on perturbations of the network's hidden activations. Adversarial perturbations of the inputs can be achieved by adding an auxiliary constraint $s_0=x$, e.g.\ by extending ${\cal L}^{DP}_\alpha$ with e.g.\ $E_0(s_0,x)=\norm{s_0-x}^2$.

In our experiments, a significant reduction of the Lipschitz estimate is in fact observed when decreasing $\alpha$ (and therefore increasing the ``adversarial'' step size $\bar\alpha\beta$).
We do of course not suggest to replace advanced adversarial training with the loss $J^{\text{AROVR}}_{0,\beta}$, but the main goal of this section is to demonstrate that a non-infinitesimal choice of $\beta$ has a theoretical and practical impact and therefore goes beyond a pure approximation of back-propagation (respectively implicit differentiation). One may also speculate that the hypothesized NGRAD-based learning in biological systems gains some robustness by leveraging a similar mechanism.

\section{A Lagrangian Perspective on Dual Propagation}
\label{sec:lagrangian}

In this section we derive a variant of dual propagation, which we refer to as DP$\tr$, which turns out to be robust to asymmetric nudging (i.e.\ choosing $\alpha \in[0,1]$ not necessarily equal to $1/2$).
Our starting point is a modified version of LeCun's classic Lagrangian-based 
derivation of backpropagation~\citep{lecun1988}.
We assume (i) that the activation functions $f_k$ are all invertible (which can be relaxed), and (ii) that $f_k$ has a symmetric derivative (i.e.\ $f_k'(x) = f_k'(x)\tr$). The second assumption clearly holds e.g.\ for element-wise activation functions.
Our initial Lagrangian relaxation can now be stated as follows,
\begin{align}
  \label{eq:Lagrangian}
  \begin{split}
    \cL(\theta) &= \min_s \max_\delta \ell(s_L)
    + \sum\nolimits_{k=1}^L \delta_k\tr \!\Paren{ f_k^{-1}(s_k) - W_{k-1} s_{k-1} }.
  \end{split}
\end{align}
Note that the multiplier $\delta_k$ corresponds to the constraint $f_k^{-1}(s_k) = W_{k-1} s_{k-1}$ (in contrast to the constraint $s_k=f_k(W_{k-1}s_{k-1})$ employed in~\citep{lecun1988}). The main step is to reparamtrize $s_k$ and $\delta_k$ in terms of $s_k^+$ and $s_k^-$,
\begin{align}
  \label{eq:reparametrization}
   s_k = \alpha s_k^+ + \bar\alpha s_k^- & & \delta_k = s_k^+ - s_k^-
\end{align}
for a parameter $\alpha\in[0,1]$ (and $\bar\alpha := 1-\alpha$). In the following we use the short-hand notations $\bar s_k := \alpha s_k^+ + \bar\alpha s_k^-$ and $a_k:=W_{k-1}\bar s_{k-1}$.

\begin{proposition}
  \label{prop:dual_prop}
  Assume that the activation functions $f_k$, $k=1,\dotsc,L-1$, are invertible, have symmetric derivatives and behave locally linear. Then the Lagrangian corresponding to the reparametrization in Eq.~\ref{eq:reparametrization} is given by
  \begin{align}
    \label{eq:new_Lagrangian}
    \begin{split}
    \cL^{\text{DP}\tr}_\alpha&(\theta) = \min_{s^+} \max_{s^-} \ell(\bar s_L)
    + \sum_{k=1}^L (s_k^+\!-\!s_k^-)\tr\!\Paren{ f_k^{-1}(\bar s_k) \!-\! W_{k-1} \bar s_{k-1} },
    \end{split}
  \end{align}
  and the optimal $s_k^\pm$ in~\eqref{eq:new_Lagrangian} satisfy
  \begin{align}
    \label{eq:new_zk_updates}
    \begin{split}
      s_k^+ &\gets f_k\!\Paren{ W_{k-1} \bar s_{k-1} + \bar\alpha W_k\tr (s_{k+1}^+-s_{k+1}^-) } \\
      s_k^- &\gets f_k\!\Paren{ W_{k-1} \bar s_{k-1} - \alpha W_k\tr (s_{k+1}^+-s_{k+1}^-) }
    \end{split}
  \end{align}
  for internal layers $k=1,\dotsc,L-1$.
\end{proposition}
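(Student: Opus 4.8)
The plan is to treat the proposition in two stages: first establish the reparametrized objective \eqref{eq:new_Lagrangian} by a change of variables, and then derive the update rules \eqref{eq:new_zk_updates} from its stationarity conditions. For the objective, I would observe that the map $(s_k^+,s_k^-)\mapsto(s_k,\delta_k)$ defined by $s_k=\alpha s_k^+ +\bar\alpha s_k^-$ and $\delta_k=s_k^+-s_k^-$ is a bijective linear change of variables, with inverse $s_k^+=s_k+\bar\alpha\delta_k$ and $s_k^-=s_k-\alpha\delta_k$. Substituting $s_k=\bar s_k$ and $\delta_k=s_k^+-s_k^-$ into \eqref{eq:Lagrangian} turns $\ell(s_L)$ into $\ell(\bar s_L)$ and each multiplier term into $(s_k^+-s_k^-)\tr(f_k^{-1}(\bar s_k)-W_{k-1}\bar s_{k-1})$, which is exactly \eqref{eq:new_Lagrangian}. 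The relabeling of $\min_s\max_\delta$ as $\min_{s^+}\max_{s^-}$ records that $s^+$ inherits the minimizing and $s^-$ the maximizing role, and---as already argued for SPROVR in Section~\ref{sec:SPROVR}---only the stationarity conditions matter for the resulting algorithm, so I need not establish that the min--max value itself is preserved.

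The core of the argument is the stationarity computation, which is where I expect the bookkeeping to be most delicate. For an internal layer $k\in\{1,\dots,L-1\}$ the variables $s_k^\pm$ enter $\cL^{\text{DP}\tr}_\alpha$ only through the $k$-th summand (via both $\delta_k$ and $\bar s_k$) and through the term $-\delta_{k+1}\tr W_k\bar s_k$ inherited from level $k+1$ (via $\bar s_k$). Differentiating with $\partial\bar s_k/\partial s_k^+=\alpha I$, $\partial\bar s_k/\partial s_k^-=\bar\alpha I$, $\partial\delta_k/\partial s_k^+=I$, $\partial\delta_k/\partial s_k^-=-I$, and invoking the symmetric-derivative assumption to identify $(f_k^{-1})'(\bar s_k)\tr$ with $(f_k^{-1})'(\bar s_k)$, yields the two conditions
\[
 f_k^{-1}(\bar s_k)-a_k+\alpha\big((f_k^{-1})'(\bar s_k)\delta_k-W_k\tr\delta_{k+1}\big)=0,
\]
\[
 -\big(f_k^{-1}(\bar s_k)-a_k\big)+\bar\alpha\big((f_k^{-1})'(\bar s_k)\delta_k-W_k\tr\delta_{k+1}\big)=0,
\]
where $a_k=W_{k-1}\bar s_{k-1}$. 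Adding them (using $\alpha+\bar\alpha=1$) forces $(f_k^{-1})'(\bar s_k)\delta_k=W_k\tr\delta_{k+1}$, and back-substitution into either equation forces $f_k^{-1}(\bar s_k)=a_k$. Thus the saddle point is pinned down by these two clean identities, with no approximation used up to this point.

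Finally I would convert these implicit identities into the explicit updates \eqref{eq:new_zk_updates} using the locally-linear assumption. Since $s_k^+-\bar s_k=\bar\alpha\delta_k$ and $s_k^--\bar s_k=-\alpha\delta_k$, a first-order Taylor expansion of $f_k^{-1}$ about $\bar s_k$---which is exact precisely when $f_k$, hence $f_k^{-1}$, is locally linear---gives $f_k^{-1}(s_k^+)=f_k^{-1}(\bar s_k)+\bar\alpha(f_k^{-1})'(\bar s_k)\delta_k$ and $f_k^{-1}(s_k^-)=f_k^{-1}(\bar s_k)-\alpha(f_k^{-1})'(\bar s_k)\delta_k$. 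Substituting the two stationarity identities yields $f_k^{-1}(s_k^+)=a_k+\bar\alpha W_k\tr\delta_{k+1}$ and $f_k^{-1}(s_k^-)=a_k-\alpha W_k\tr\delta_{k+1}$, and applying $f_k$ to both sides (legitimate by invertibility) produces exactly \eqref{eq:new_zk_updates}, with the $\alpha\leftrightarrow\bar\alpha$ swap that distinguishes DP$\tr$ from \eqref{eq:old_DP_updates} and makes the two coincide at $\alpha=1/2$.

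The main obstacle, and the step I would double-check most carefully, is the stationarity calculation itself: correctly tracking the two distinct appearances of each $s_k^\pm$ (inside $\bar s_k$ with weights $\alpha,\bar\alpha$ and inside $\delta_k$ with weights $\pm1$) and transposing the Jacobian of $f_k^{-1}$ via the symmetry assumption, since any sign or weight error there would destroy the cancellation that collapses the pair of equations to $f_k^{-1}(\bar s_k)=a_k$ together with $(f_k^{-1})'(\bar s_k)\delta_k=W_k\tr\delta_{k+1}$.
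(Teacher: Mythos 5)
Your proposal is correct and follows essentially the same route as the paper's proof: the same two stationarity conditions (eq.~\ref{eq:zk_optimality_cond}), the same addition step yielding $(f_k^{-1})'(\bar s_k)(s_k^+-s_k^-)=W_k\tr(s_{k+1}^+-s_{k+1}^-)$ and $f_k^{-1}(\bar s_k)=a_k$, and the same use of local linearity to move the feedback term inside $f_k$. The only (cosmetic) differences are that you obtain the conditions by differentiating the reparametrized Lagrangian directly rather than by forming $\text{(II)}\pm\alpha\text{(I)}$ from the original one, and your final step Taylor-expands $f_k^{-1}$ about $\bar s_k$ instead of writing $f_k$ in the explicit affine form $s_k^0+D_k a$, which is arguably a slightly cleaner way to execute the identical idea.
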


For the output layer activities $s_L^\pm$ we absorb $f_L$ into the target loss (if necessary) and therefore solve
\begin{align}
  \min_{s_L^+} \max_{s_L^-} \ell(\bar s_L) + (s_L^+-s_L^-)\tr (\bar s_L - a_L)
\end{align}
with corresponding necessary optimality conditions
\begin{align}
  \begin{split}
    0 &= \alpha \ell'(\bar s_L) + \bar s_L - a_L + \alpha (s_L^+-s_L^-) \\
    0 &= \bar\alpha \ell'(\bar s_L) - \bar s_L + a_L + \bar\alpha (s_L^+-s_L^-).
  \end{split}
  \label{eq:zL_optimality}
\end{align}
Adding these equations reveals $\ell'(\bar s_L) + s_L^+ - s_L^- = 0$. Inserting this
in any of the equations in~\eqref{eq:zL_optimality} also implies $\bar s_L = a_L = W_{L-1}\bar s_{L-1}$.
Thus, by combining these relations and with $g:=\ell'(W_{L-1}\bar s_{L-1})$, the updates for $s_L^\pm$ are given by
\begin{align}
  \label{eq:zL_updates_DPplus}
  s_L^+ \gets W_{L-1}\bar s_{L-1} - \bar\alpha g & & s_L^- \gets W_{L-1}\bar s_{L-1} + \alpha g.
\end{align}
For the $\beta$-weighted least-squares loss, $\ell(s_L) = \frac{\beta}{2} \norm{s_L-y}^2$, the above updates reduce to $s_L^+ \gets a_L - \bar\alpha \beta (a_L-y)$ and $s_L^- \gets a_L + \alpha \beta (a_L-y)$.

The neural activities $s_k^\pm$ together encode the forward and the adjoint
state. Since the updates in Eq.~\ref{eq:new_zk_updates} are based on the
adjoint state method, we denote the resulting algorithm adjoint-DP or just DP$\tr$.

\paragraph{Relation to the original dual propagation}
The update equations in~\eqref{eq:new_zk_updates} coincide with the original dual propagation rules if $\alpha=1/2$~\citep{hoier2023conf}, although the underlying objectives $\cL^{\text{DP}\tr}_\alpha$~\eqref{eq:new_Lagrangian} and $\cL^{\text{DP}}_\alpha$ are fundamentally different.
If $\alpha\ne 1/2$, then $\alpha$ and $\bar\alpha$ switch places w.r.t.\ the error signals (but not in the definition of $\bar s$) compared to the original dual propagation method.\footnote{This partial exchange of roles of $\alpha$ and $\bar\alpha$ is somewhat analogous to the observation that e.g.\ forward differences in the primal domain become backward differences in the adjoint. }
The updates in~\eqref{eq:new_zk_updates} for $\alpha=0$ correspond to an algorithm called ``Fenchel back-propagation'' discussed in~\citep{zach2021}.

Both the objective of the original dual propagation~\eqref{eq:L_DP} and the objective of the improved variant~\eqref{eq:new_Lagrangian} can be expressed in a general contrastive form,
but the underlying potentials $E_k$ are somewhat different,
\begin{align}
  \text{DP: }
   E_k(s_k^+, \bar{s}_{k-1}) \!-\! E_k(s_k^-, \bar{s}_{k-1})
  &= G_k(s_k^+) - G_k(s_k^-) - (s_k^+ \!-\! s_k^-)\tr W_{k-1} \bar s_{k-1} \nonumber \\
  \text{DP$\tr$: }
  E_k(s_k^\pm, \bar{s}_{k-1}) \!-\! E_k(s_k^\mp, \bar{s}_{k-1})
  &=(s_k^+ \!-\! s_k^-)\tr\nabla G_k(\bar{s}_k) - (s_k^+ \!-\! s_k^-)^\top W_{k-1} \bar s_{k-1}. \nonumber
\end{align}
Here $G_k$ is a convex mapping ensuring that $\argmin_{s_k}E_k(s_k,s_{k-1})$ provides the desired activation function $f_k$. The relation between $f_k$ and $G_k$ is given by $f_k = \nabla G_k^*$ and $f_k^{-1} = \nabla G_k$ (where $G_k^*$ is the convex conjugate of $G_k$).
Activations function induced by $G_k$ have automatically symmetric Jacobians as $f_k' = \nabla^2 G_k^*$ under mild assumptions.
The main difference between these two flavors of DP can be restated as follows,
\begin{align}
\begin{split}
    \Delta E_k &= G_k(s_k^+) - G_k(s_k^-) - (s_k^+ \!-\! s_k^-)\tr\nabla G_k(\bar{s}_k) 
        = D_{G_k}(s_k^+\|\bar s_k) - D_{G_k}(s_k^- \| \bar s_k),
\end{split}
\end{align}
where $D_{G_k}$ is the Bregman divergence induced by $G_k$. The difference of Bregman divergences can have any sign, and $\Delta E_k=0$ if $D_{G_k}(s_k^+\|\bar s_k) = D_{G_k}(s_k^- \| \bar s_k)$. Hence, for the choice $\alpha\in\{0,1\}$ (i.e.\ $\bar s_k = s_k^+$ or $\bar s_k=s_k^-$), $\Delta E_k$ can only be zero when $s_k^+ = s_k^-$. Fixing $\alpha=1/2$ and $G_k$ to any convex quadratic function implies $\Delta E_k=0$ (as $D_{G_k}$ becomes the squared Mahalanobis distance in this setting).

\paragraph{Non-invertible activation function}
If $f_k$ is not invertible at the linearization point (such as the softmax function), then $D_k$ is singular and the constraint $f_k^{-1}(s_k) = W_{k-1}s_{k-1}$ is converted into a constraint that $D_k^+ s_k$ is restricted to a linear subspace,
\begin{align}
    D_k^+ (s_k-s_k^0) = W_{k-1} s_{k-1} + N_k v_k,
\end{align}
where $N_k$ spans the null space of $D_k$ and $v_k$ is an arbitrary vector. Going through the similar steps as above leads to the same updates for $s_k^\pm$.

\paragraph{Starting from Lecun's Lagrangian}
If we start from the more natural Lagrangian
\begin{align}
  \begin{split}
    \cL(\theta) &= \min_s \max_\delta \ell(s_L) 
    + \sum\nolimits_{k=1}^L \delta_k\tr \!\Paren{ s_k - f_k(W_{k-1} s_{k-1}) },
  \end{split}
\end{align}
instead from~\eqref{eq:Lagrangian}, then 
the back-propagated signal $s_{k+1}^--s_{k+1}^+$ cannot be easily moved inside the activation function $f_k$. The update equations for $s_k^\pm$ are now of the less-convenient form
\begin{align}
\begin{split}
    s_k^+ &\gets f_k(W_{k-1}\bar s_{k-1}) + \bar{\alpha} W_k\tr f_{k+1}'(\bar s_k)(s_{k+1}^-\!-\!s_{k+1}^+)\\
    s_k^- &\gets f_k(W_{k-1}\bar s_{k-1}) - \alpha W_k\tr f_{k+1}'(\bar s_k)(s_{k+1}^-\!-\!s_{k+1}^+) , 
\end{split} \nonumber
\end{align}
which still require derivatives of the (next-layer) activation functions, which makes it less appealing to serve as an NGRAD method.

\section{Numerical Validation}

Our numerical experiments aim to verify two claims: first, in Section~\ref{sec:exp_strong_feedback} we demonstrate that choosing different values for $\alpha$ makes a significant difference in the learned weight matrices. Second, Section~\ref{sec:vgg16} validates that the proposed DP$\tr$ method is efficient enough to enable successful training beyond toy-sized DNNs for arbitrary choices of $\alpha$~\footnote{The code used in our experiments is available at: \url{github.com/Rasmuskh/dualprop_icml_2024}}.

\paragraph{Implementation}
The adjoint variant DP$\tr$ literally implements the updates stated in
Eq.~\ref{eq:new_zk_updates}. In order to enable DP for values
$\alpha\ne 1/2$, we leverage stabilized fixed-point updates
(Section~\ref{sec:dampened_fixed_point}), 
Not using stabilized updates yields failures to learn successfully as indicated in Table~\ref{tab:results} (with details given in Section~\ref{sec:DP_failure}).
Using stable fixed-point updates comes at a significant computational cost as more iterations are required to obtain sufficiently accurate neural states. Consequently we apply these only in the experiments related to the Lipschitz behavior described in Section~\ref{sec:exp_strong_feedback}.

\subsection{The impact of $\alpha$ on the Lipschitz continuity}
\label{sec:exp_strong_feedback}

\def\advsize{\tiny}
\def\advspacing{\,\,\,}

\begin{table*}[t]
\setlength{\tabcolsep}{2pt}
  \centering
  \footnotesize
  \caption{Test accuracy and Lipschitz properties of an MLP trained with DP and DP$\tr$ for different choice of $\alpha$ (and fixed $\beta=1/2$). $\hat L$ is the spectral norm of the product of weight matrices. Results are averaged over 3 random seeds. The additional results under DP and BP correspond to adding input layer robustness (DP) and FGSM-based adversarial training (BP), respectively. }
  \begin{tabular}{l|lrrr|rrr||r}
    \hline
    & \multicolumn{4}{c|}{DP} & \multicolumn{3}{c||}{DP$\tr$} & \multicolumn{1}{c}{BP} \\ 
    & $\alpha$ & \multicolumn{1}{c}{1} & \multicolumn{1}{c}{\textonehalf}
    & \multicolumn{1}{c|}{0} & \multicolumn{1}{c}{1} & \multicolumn{1}{c}{\textonehalf} & \multicolumn{1}{c||}{0} & \\ \hline
    MNIST   & Acc      & $95.85\cpm0.68$ & $98.09\cpm0.06$ & $98.46\cpm0.07$ & $98.06\cpm0.15$ & $97.9\cpm0.1~~$ & $98.21\cpm0.13$ & $98.05\cpm0.12$ \vspace{-0.15em} \\
            &          &                 &           & \advsize$98.43\cpm0.11\advspacing$ & & & & \advsize$98.43\cpm0.04\advspacing\,\,$ \\
            & $\hat L$ & $522\cpm288~$ & $67.2\cpm12.6$  & $29.9\cpm4.2~~$ &  $73.4\cpm2.7~~$  & $78.1\cpm9.9~~$ & $43.2\cpm19.4$  & $55.1\cpm14.4$ \vspace{-0.15em} \\
          &          &               &           & \advsize$15.87\cpm6.34\advspacing$ & & &  & \advsize$36.16\cpm4.23\advspacing\,\,$ \\
    \hline
    F-MNIST & Acc      & $83.36\cpm 0.74$ & $88.32\cpm0.05$ & $89.16\cpm0.43$  & $88.26\cpm0.46$ & $88.35\cpm0.37$ & $88.18\cpm0.27$ & $88.37\cpm0.56$ \vspace{-0.15em} \\
            &          &                  &         & \advsize$88.29\cpm0.79\advspacing$ & & & & \advsize$88.64\cpm0.25\advspacing\,\,$ \\
    {}      & $\hat L$ & $157.7\cpm57.8$  & $34.3\cpm4.0~~$ & $19.97\cpm3.9~~$ & $52.0\cpm23.6$  & $35.1\cpm7.0~~$ & $29.4\cpm2.7~~$ & $37.7\cpm10.2$ \vspace{-0.15em} \\
    {}      &          &                  &         & \advsize$8.05\cpm1.03\advspacing$ & & & & \advsize$12.68\cpm0.81\advspacing\,\,$ \\
    \hline
  \end{tabular}
  \label{tab:lipschitz}
\end{table*}

Section~\ref{sec:AROVR} suggests that the choice of $\alpha$ matters in particular in the strong feedback setting (i.e.\ $\beta$ is not close to zero). We ran experiments on MNIST and FashionMNIST using a 784-512($\!\times\!$ 2)-10 MLP with ReLU activation functions, and Table~\ref{tab:lipschitz} reports test accuracy and Lipschitz estimates (computed as spectral norm of the accumulated weight matrix $W_2W_1W_0$ as ReLU is 1-Lipschitz) after 20 epochs of ADAM-based training (with learning rate $0.001$ and default parameters otherwise). No additional loss terms such as weight decay were used.
As indicated by Prop.~\ref{thm:adv_CHL}, a lower value of $\alpha$ yields a significantly smaller Lipschitz constant (as well as improved test accuracy) for the DP method. The DP$\tr$ approach leads to a similar but far less pronounced behavior regarding the Lipschitz estimate. The small differences between the Lipschitz estimates for the different choices of $\alpha$ have no consistent impact on the test accuracy.

As pointed out in Section~\ref{sec:AROVR}, the DP objective $\cL^{DP}_{\alpha=0}$ is robust only w.r.t.\ perturbations in the internal layers. Consequently, the very first layer is not necessarily adversarially robust, and additional robustness can be achieved by extending $\cL^{DP}_0$ with a term for the input layer, introducing another term $E_0(s,x) = \norm{s-x}^2/2$. Table~\ref{tab:lipschitz} also includes the resulting accuracy and Lipschitz estimates for this extended variant of $\cL_0^{DP}$ as well as the results for simple adversarial training (using FGSM~\cite{goodfellow2014adversarial_examples} as approximate inner maximization; the step size $\eps=0.01$ chosen to roughly match the test accuracy of DP). Adding input layer robustness significantly reduces the Lipschitz estimate of the DP-trained network (at a small reduction of test accuracy).
In the setting $\alpha=1/2$, DP and DP$\tr$ use identical update rules for the hidden units, but treat the output units differently (Eq.~\ref{eq:zL_updates_DP} vs.\ Eq.~\ref{eq:zL_updates_DPplus}), hence the two methods behave very similar but not identical in practice.
Finally, we remark that the difference between all tested methods largely vanishes in the weak feedback setting as they all mimic back-propagation when $\beta\to 0^+$.

\begin{table}[tb]
    \centering
    \caption{Mean test accuracy in percent for the original and the improved dual propagation methods using $\alpha\in\{0,1\}$. (X) indicates that the particular experiment did not converge. Results are averaged over five random seeds.}
    \label{tab:results}
    \begin{tabular}{l|ll|ll}
        \hline
                   & \multicolumn{2}{c|}{$\alpha=0$} & \multicolumn{2}{c}{$\alpha=1$} \\ \hline
        Iters & DP$\tr$    & DP   & DP$\tr$         & DP        \\ \hline
        1          & $98.3\cpm0.1$ & $98.50\cpm0.1$& $98.4 \cpm0.1$     & $97.9\cpm0.1$     \\ \hline
        30         & $98.4\cpm0.0$ & X             & $98.5\cpm0.1$      & X                \\ \hline
    \end{tabular}
\end{table}

\begin{table*}[tb]
\caption{Table of top-1 and top-5 accuracies in percent for VGG16 networks trained with DP$\tr$ for different choices of $\beta$ and $\alpha$.
($\mathrm{\dagger}$) Indicates an experiment where a reasonable model was found before training eventually diverged. (X) indicates an experiment where no reasonable model was learned.}
\label{tab:cifar}
\scriptsize
\setlength{\tabcolsep}{2pt}
\centering
\begin{tabular}{ll|lll|lll|lll||l}
\hline
                          & $\beta$ & \multicolumn{3}{c|}{$1.0$}      & \multicolumn{3}{c|}{$0.1$} & \multicolumn{3}{c||}{$0.01$} & \multirow{2}{*}{BP}\\ 
                          & $\alpha$ & $0.0$    & $0.5$    & $1.0$       & $0.0$    & $0.5$    & $1.0$   & $0.0$    & $0.5$    & $1.0$    \\ \hline
CIFAR10                   & Top-1 & $ 91.34 \cpm 0.16 $ & X & X & $ 92.41 \cpm 0.07 $       & $\mathrm{\dagger} 89.3 \cpm 0.17 $       & X      & $ 92.26 \cpm 0.22 $       & $ 92.19 \cpm 0.32 $       & $ 92.19 \cpm 0.08 $ & $ 92.36 \cpm 0.16 $       \\ \hline
\multirow{2}{*}{CIFAR100} & Top-1 & $ ~69.3 \pm 0.08 $ & X & X & $ 70.05 \cpm 0.23 $       & $ 69.25 \cpm 0.28 $       & X      & $ 69.33 \cpm 0.24 $       & $ 69.38 \cpm 0.10 $       & $ 68.83 \cpm 0.06 $ & $ ~~69.1 \cpm 0.1 $      \\
                          & Top-5 & $ ~89.6 \pm 0.11 $ & X & X & $ 89.42 \cpm 0.16 $       & $ 88.54 \cpm 0.18 $      & X      &  $ 88.46 \cpm 0.17 $       & $ 88.23 \cpm 0.18 $       & $ 88.28 \cpm 0.13 $ & $ 88.24 \cpm 0.22 $      \\ \hline
\end{tabular}
\end{table*}

\begin{figure*}[t]
     \centering
     \begin{subfigure}[b]{0.495\textwidth}
         \centering
         \includegraphics[width=\textwidth]{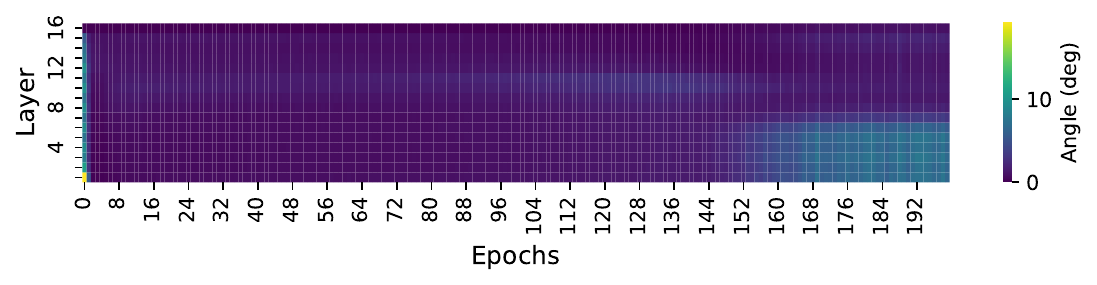}
         \caption{$\alpha=0.0$, $\beta=0.01$}\label{fig:grad_angle_epochs_beta001}
     \end{subfigure}
    \hfill
     \begin{subfigure}[b]{0.495\textwidth}
         \centering
         \includegraphics[width=\textwidth]{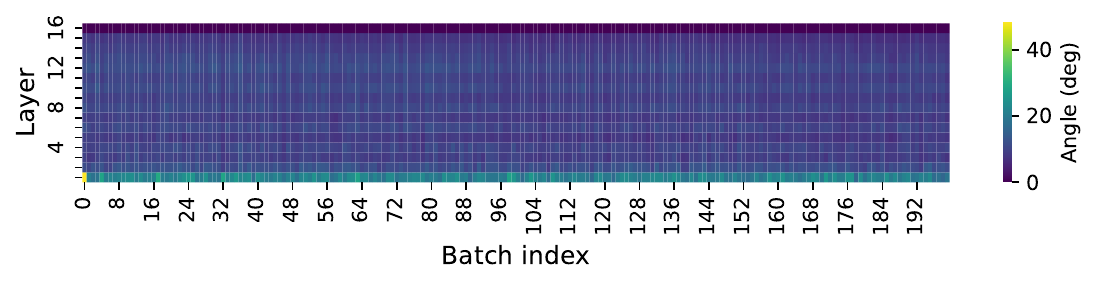}
         \caption{$\alpha=0.0$, $\beta=0.01$}\label{fig:grad_angle_batches_beta001}
     \end{subfigure}
     \hfill
     \begin{subfigure}[b]{0.495\textwidth}
         \centering
         \includegraphics[width=\textwidth]{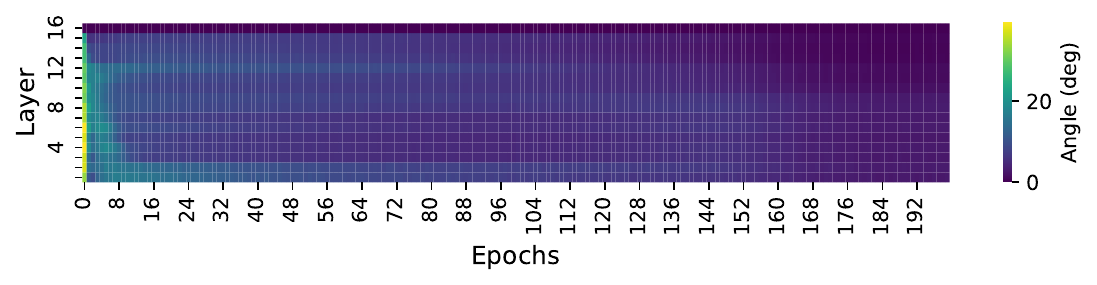}
         \caption{$\alpha=0.0$, $\beta=1.0$}\label{fig:grad_angle_epochs_beta1}
     \end{subfigure}
    \hfill
     \begin{subfigure}[b]{0.495\textwidth}
         \centering
         \includegraphics[width=\textwidth]{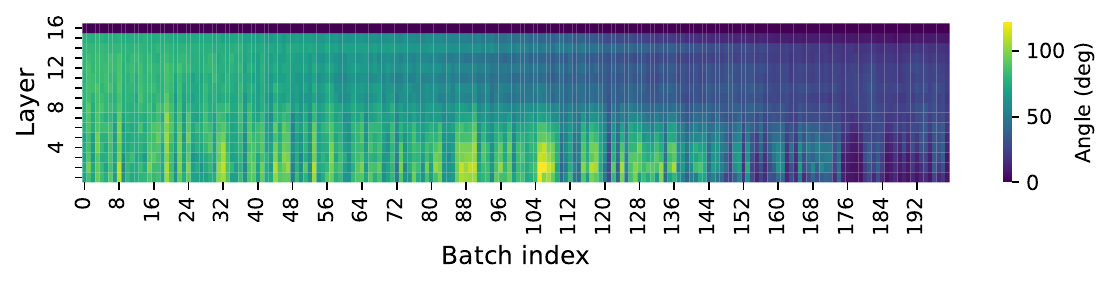}
         \caption{$\alpha=0.0$, $\beta=1.0$}\label{fig:grad_angle_batches_beta1}
     \end{subfigure}
     \hfill
        \caption{CIFAR100: Angle of gradient estimates relative to BP gradients, for DP$\tr$ with $\alpha=0$ and $\beta=0.01$ and $\beta=1.0$. Angles are plotted across layers and epochs (\textbf{left column}). The \textbf{right column} zooms in on the first 200 minibatches (i.e.\ a fifth of an epoch).}
        \label{fig:grad_angle}
\end{figure*}

\subsection{VGG16 experiments}
\label{sec:vgg16}
We also train a 16 layer VGG network using DP$\tr$ with a crossentropy classification loss on the CIFAR10 and CIFAR100 datasets. 
In contrast to~\cite{hoier2023conf} we do not employ the efficient, backpropagation-like forward-backward traversal of the layers. 
Instead we apply 17 forward-only passes through the layers, which---due to the dyadic neurons---allows errors to gradually propagate to all layers. We employ standard data augmentation (random crops and horizontal flips) and carry out all experiments with 3 random seeds and report mean and std.\ deviation. 10\% of the training data is hold out as validation set for model selection.
The hyperparameters are listed in Section~\ref{app:cnn_details}. 

Unlike the MLP setting, we find now that the choice of $\beta$ has some influence on the stability of the training method. Note that~\cite{hoier2023conf} actually reported results for $\beta=1/\text{batch-size}$ (instead of $\beta=1$) as the (linearized) \emph{mean} CE loss was used.
We instead use the summed CE loss (so the feedback signal is not scaled by the batchsize). Results for DP$\tr$ and $\beta\in\{0.01, 0.1, 1.0\}$ in Table~\ref{tab:cifar} show that the choice of $\alpha$ has no noticeable impact in the weak feedback setting ($\beta=0.01$), which is to be expected.
In the moderate feedback setting ($\beta=0.1$), DP$\tr$ with $\alpha=1$ fails, and DP$\tr$ with $\alpha=0.5$ eventually diverges on CIFAR10 after reaching an acceptable model. The choice $\beta=1.0$ requires $\alpha=0$ for successful training and leads to a drop in performance.
Fig.~\ref{fig:grad_angle} compares the angle between DP$\tr$ gradient estimates and backprop gradient estimates when $\alpha=0$ in the strong ($\beta=1$) and weak ($\beta=0.01$) feedback setting. It can be seen in Figs.~\ref{fig:grad_angle_epochs_beta001} and~\ref{fig:grad_angle_epochs_beta1}, that DP$\tr$ with $\beta=0.01$ always provides excellent gradient estimates, whereas the gradients are initially not well aligned when $\beta=1$.
The difference is especially pronounced during the first 200 batches (a fifth of an epoch) in Figs.~\ref{fig:grad_angle_batches_beta001} and~\ref{fig:grad_angle_batches_beta1}.

\textbf{Imagenet32x32}
We restrict the more compute intensive ImageNet32x32 experiments to the setting $\alpha=1/2$ and $\beta=0.01$. We use 5\% of the training data for validation and model selection, and use the public validation dataset to evaluate the selected model. With top-1 accuracy of $41.59\pm0.11$ and top-5 accuracy of $65.63\pm0.19$ it is apparent that the repeated inference scheme maintains performance on par with (or slightly above) the results reported for DP and BP in~\citep{hoier2023conf}.

\section{Discussion}
\label{sec:bioplausibility}

\paragraph{Biological plausibility of the dyadic neuron model}
In the DP framework neurons have two internal compartments, which integrate both bottom-up and top-down input signals. 
Introducing distinct neural compartments is somewhat biologically plausible as recent neuroscience research suggests that dendrites are compartmental structures capable of performing fairly complex operations~\cite{chavlis2021bio_dendrites_2_ANNs, beniaguev2021single_cortical_column}.

It has been proposed in~\citep{Guerguiev2017,richards2019dendritic} that distal apical dendrites (integrating top-down feedback signals) might be responsible for credit assignment in pyramidal neurons, whereas basal dendrites integrate local feed-forward and recurrent inputs. The DP algorithm does not fully fit this picture as the state updates
require integrating bottom-up input and feed-back input jointly.
Further, DP relies on some form of multiplexing within a neuron in order to communicate both the neural state difference
``downstream'' and the mean state ``upstream.'' 
This distinction is illustrated in Fig.~\ref{fig:dyadic_and_pyramidal_neuron}.
In spite of these caveats, the dyadic neuron model shows that---by taking a small step away from the single compartment based McCulloch Pitts neuron model---one can nevertheless achieve asynchronous propagation of meaningful errors entirely driven by local neural activities. While still far from biological reality this may be useful for goals such as on-chip training of neuromorphic hardware.

\paragraph{Compartmental interpretation of related models}
The contrastive, lifted and difference target propagation based models can all be interpreted as compartmental models (either compartments within a neuron or within a small neural circuitry). In EP and CHL, neurons need to store the activities of the two temporally distinct inference phases. In dual propagation and in LPOM, neurons are required to maintain the nudged internal states. Neurons in difference target propagation are expected to have compartments for storing predictions, targets and possibly even noise perturbed states~\citep{lee2015}.
Works such as~\citep{Guerguiev2017,sacramento2018} explicitly focus on building biologically inspired compartmental neuron models, although these methods incur even higher computational costs by also modelling spiking behaviour. 
The segregated dendrite model~\citep{Guerguiev2017} requires two temporally distinct phases making it closer to a neuroscientists' realization of CHL or EP. The dendritic cortical microcircuit model~\citep{sacramento2018} is single phased and implementation-wise similar to fully asymmetric DP.

\paragraph{Timescales of neural dynamics}
When training a network with BP, the weights of a layer can not be updated until the forward pass has completed, and the error has at least been back-propagated to the layer in question. This strict synchronization requirement has motivated several algorithms~\cite{jaderberg2017decoupled, nokland2019training}, which---to varying degree---relax the need for precise synchronization.
One way to avoid synchronisation across layers entirely is to use auxiliary losses at each layer, which often is applied in unsupervised learning methods such as ART~\cite{grossberg1987competitive}, deep belief networks~\cite{hinton2006fast,bengio2006greedy} or---more recently---soft-Hebbian learning~\cite{journe2023hebbian}.
In the supervised learning setting, many lifted neural network approaches (such as~\cite{carreira2014distributed,whittington2017approximation,li2020}, that result in a joint minimization problem over neural states and synaptic weights) in principle rely on minimal synchronization (though in practice, inference of the neural activities is usually run to convergence to avoid storing intermediate neural states).

Among biologically inspired alternatives to BP, difference target propagation~\cite{lee2015} has similar synchronization constraints as BP, while CHL and EP require temporally distinct inference phases to converge before any weight update. DP only has a single inference phase and does not impose a strict schedule on the order in which layers are traversed, but still requires that meaningful error signals are able to reach layers before their weights are updated (see e.g. the ''random-DP'' variant in~\citep{hoier2023conf}). In a setting with asynchronous neurons this corresponds to a requirement that synaptic plasticity takes place at a (much) slower timescale than neural activities (which is at the core of continuous-time neuron models used in computational neuroscience).
This agrees with the observation that neural activity, short-term plasticity (e.g. working memory and attention) and long-term plasticity take place at entirely different timescales (10ms, 100ms-minutes and minutes-hours respectively, e.g.~\cite{tsodyks1998neural,abbott2004synaptic,citri2008synaptic}).

\paragraph{Weight transport} 
One of the key objections to BP is that it is biologically implausible for the forwards and backwards pathways to be symmetric (using $W$ to propagate forwards and $W^\top$ to propagate errors backwards). This is known as the weight transport problem~\citep{crick1989recent,grossberg1987competitive}, and variations of this issue is present in many NGRAD algorithms, including DP.
The weight transport problem is not addressed in the current work, but we note that DP has previously been shown to be compatible with Kolen-Pollack learning~\citep{kolenpollack1994} of the feedback weights~\citep{hoier2023conf}. There are a number of other interesting attempts at solving the weight transport problem in the literature such as random feedback alignment~\cite{lillicrap2014,nokland2016,frenkel2021learning} and stochastic alignment methods~\cite{Akrout2019,ernoult2022}.

\section{Conclusion}

Fully local activity difference based learning algorithms are essential for achieving on-device training of neuromorphic hardware. However, the majority of works require distinct inference phases and are costly to simulate. The dual propagation formulation overcomes these issues but also relies on symmetric nudging (at each layer), which may itself be too strict a requirement in noisy hardware. Furthermore, a rigorous derivation of the DP objective has so far been missing.
In this work we first present the theoretical foundation for dual propagation and argue (in theory and practice) why asymmetric nudging can be beneficial, despite requiring slow inference methods (instead of fast fixed-point dynamics) on digital hardware.
Further, we present an improved variant of dual propagation derived from a suitable Lagrangian,
which recovers the dynamics of dual propagation in the case of symmetric nudging ($\alpha=1/2$). In the case of asymmetric nudging (such as $\alpha\in\{0,1\}$) the new variant leads to slightly modified fixed-point dynamics, which are in fact significantly more stable as demonstrated in our experiments.

\paragraph{Acknowledgements}
This work was supported by the Wallenberg AI, Autonomous Systems and Software Program (WASP) funded by the Knut and Alice Wallenberg Foundation, and by the Chalmers AI Research Centre (CHAIR).
The experiments were enabled by the supercomputing resource Berzelius provided by National Supercomputer Centre at Linköping University and the Knut and Alice Wallenberg foundation. 
We would also like to thank Maxence Ernoult for helpful discussions.

{
  \small
  \bibliographystyle{plain}
  \bibliography{main}

\begin{thebibliography}{10}

\bibitem{abbott2004synaptic}
LF~Abbott and Wade~G Regehr.
\newblock Synaptic computation.
\newblock {\em Nature}, 431(7010):796--803, 2004.

\bibitem{Akrout2019}
Mohamed Akrout, Collin Wilson, Peter Humphreys, Timothy Lillicrap, and
  Douglas~B Tweed.
\newblock Deep learning without weight transport.
\newblock In {\em Advances in Neural Information Processing Systems},
  volume~32, 2019.

\bibitem{askari2018lifted}
Armin Askari, Geoffrey Negiar, Rajiv Sambharya, and Laurent~El Ghaoui.
\newblock Lifted neural networks.
\newblock {\em arXiv preprint arXiv:1805.01532}, 2018.

\bibitem{bengio2006greedy}
Yoshua Bengio, Pascal Lamblin, Dan Popovici, and Hugo Larochelle.
\newblock Greedy layer-wise training of deep networks.
\newblock {\em Advances in neural information processing systems}, 19, 2006.

\bibitem{beniaguev2021single_cortical_column}
David Beniaguev, Idan Segev, and Michael London.
\newblock Single cortical neurons as deep artificial neural networks.
\newblock {\em Neuron}, 109(17):2727--2739, 2021.

\bibitem{carreira2014distributed}
Miguel Carreira-Perpinan and Weiran Wang.
\newblock Distributed optimization of deeply nested systems.
\newblock In {\em Artificial Intelligence and Statistics}, pages 10--19, 2014.

\bibitem{chavlis2021bio_dendrites_2_ANNs}
Spyridon Chavlis and Panayiota Poirazi.
\newblock Drawing inspiration from biological dendrites to empower artificial
  neural networks.
\newblock {\em Current opinion in neurobiology}, 70:1--10, 2021.

\bibitem{choromanska2019beyond}
Anna Choromanska, Benjamin Cowen, Sadhana Kumaravel, Ronny Luss, Mattia
  Rigotti, Irina Rish, Paolo Diachille, Viatcheslav Gurev, Brian Kingsbury,
  Ravi Tejwani, et~al.
\newblock Beyond backprop: Online alternating minimization with auxiliary
  variables.
\newblock In {\em International Conference on Machine Learning}, pages
  1193--1202. PMLR, 2019.

\bibitem{citri2008synaptic}
Ami Citri and Robert~C Malenka.
\newblock Synaptic plasticity: multiple forms, functions, and mechanisms.
\newblock {\em Neuropsychopharmacology}, 33(1):18--41, 2008.

\bibitem{crick1989recent}
Francis Crick.
\newblock The recent excitement about neural networks.
\newblock {\em Nature}, 337:129--132, 1989.

\bibitem{dempe2013bilevel}
Stephan Dempe and Alain~B Zemkoho.
\newblock The bilevel programming problem: reformulations, constraint
  qualifications and optimality conditions.
\newblock {\em Mathematical Programming}, 138(1):447--473, 2013.

\bibitem{ernoult2022}
Maxence~M Ernoult, Fabrice Normandin, Abhinav Moudgil, Sean Spinney, Eugene
  Belilovsky, Irina Rish, Blake Richards, and Yoshua Bengio.
\newblock Towards scaling difference target propagation by learning backprop
  targets.
\newblock In {\em International Conference on Machine Learning}, pages
  5968--5987. PMLR, 2022.

\bibitem{frenkel2021learning}
Charlotte Frenkel, Martin Lefebvre, and David Bol.
\newblock Learning without feedback: Fixed random learning signals allow for
  feedforward training of deep neural networks.
\newblock {\em Frontiers in neuroscience}, 15:629892, 2021.

\bibitem{goodfellow2014adversarial_examples}
Ian Goodfellow, Jonathon Shlens, and Christian Szegedy.
\newblock Explaining and harnessing adversarial examples.
\newblock In {\em International Conference on Learning Representations}, 2015.

\bibitem{gould2019deep}
Stephen Gould, Richard Hartley, and Dylan Campbell.
\newblock Deep declarative networks: A new hope.
\newblock {\em arXiv preprint arXiv:1909.04866}, 2019.

\bibitem{grossberg1987competitive}
Stephen Grossberg.
\newblock Competitive learning: From interactive activation to adaptive
  resonance.
\newblock {\em Cognitive science}, 11(1):23--63, 1987.

\bibitem{gu2020fenchellifted}
Fangda Gu, Armin Askari, and Laurent El~Ghaoui.
\newblock Fenchel lifted networks: A lagrange relaxation of neural network
  training.
\newblock In {\em International Conference on Artificial Intelligence and
  Statistics}, pages 3362--3371. PMLR, 2020.

\bibitem{Guerguiev2017}
Jordan Guerguiev, Timothy~P Lillicrap, and Blake~A Richards.
\newblock Towards deep learning with segregated dendrites.
\newblock {\em eLife}, 6:e22901, dec 2017.

\bibitem{hinton2006fast}
Geoffrey~E Hinton, Simon Osindero, and Yee-Whye Teh.
\newblock A fast learning algorithm for deep belief nets.
\newblock {\em Neural computation}, 18(7):1527--1554, 2006.

\bibitem{hoier2023conf}
Rasmus H{\o}ier, D.~Staudt, and Christopher Zach.
\newblock Dual propagation: Accelerating contrastive hebbian learning with
  dyadic neurons.
\newblock In {\em International Conference on Machine Learning}, 2023.

\bibitem{hoier2020}
Rasmus H{\o}ier and Christopher Zach.
\newblock Lifted regression/reconstruction networks.
\newblock In {\em Proceedings of the British Machine Vision Conference (BMVC)},
  2020.

\bibitem{jaderberg2017decoupled}
Max Jaderberg, Wojciech~Marian Czarnecki, Simon Osindero, Oriol Vinyals, Alex
  Graves, David Silver, and Koray Kavukcuoglu.
\newblock Decoupled neural interfaces using synthetic gradients.
\newblock In {\em Proceedings of the 34th International Conference on Machine
  Learning-Volume 70}, pages 1627--1635. JMLR. org, 2017.

\bibitem{journe2023hebbian}
Adrien Journ{\'e}, Hector~Garcia Rodriguez, Qinghai Guo, and Timoleon Moraitis.
\newblock Hebbian deep learning without feedback.
\newblock In {\em The Eleventh International Conference on Learning
  Representations}, 2023.

\bibitem{kendall2020building}
Jack~D Kendall and Suhas Kumar.
\newblock The building blocks of a brain-inspired computer.
\newblock {\em Applied Physics Reviews}, 7(1):011305, 2020.

\bibitem{kolenpollack1994}
J.F. Kolen and J.B. Pollack.
\newblock Backpropagation without weight transport.
\newblock In {\em Proceedings of 1994 IEEE International Conference on Neural
  Networks (ICNN'94)}, volume~3, pages 1375--1380 vol.3, 1994.

\bibitem{laborieux2021scaling}
Axel Laborieux, Maxence Ernoult, Benjamin Scellier, Yoshua Bengio, Julie
  Grollier, and Damien Querlioz.
\newblock Scaling equilibrium propagation to deep convnets by drastically
  reducing its gradient estimator bias.
\newblock {\em Frontiers in neuroscience}, 15:129, 2021.

\bibitem{laborieux2022holomorphic}
Axel Laborieux and Friedemann Zenke.
\newblock Holomorphic equilibrium propagation computes exact gradients through
  finite size oscillations.
\newblock {\em Advances in Neural Information Processing Systems},
  35:12950--12963, 2022.

\bibitem{laborieux2023improvingholomorphic}
Axel Laborieux and Friedemann Zenke.
\newblock Improving equilibrium propagation without weight symmetry through
  jacobian homeostasis.
\newblock {\em arXiv preprint arXiv:2309.02214}, 2023.

\bibitem{lecun1988}
Yann Lecun.
\newblock A theoretical framework for back-propagation.
\newblock In {\em Proceedings of the 1988 Connectionist Models Summer School,
  CMU, Pittsburg, PA}, pages 21--28. Morgan Kaufmann, 1988.

\bibitem{lee2015}
Dong-Hyun Lee, Saizheng Zhang, Asja Fischer, and Yoshua Bengio.
\newblock Difference target propagation.
\newblock In {\em Joint european conference on machine learning and knowledge
  discovery in databases}, pages 498--515. Springer, 2015.

\bibitem{li2020}
Jia Li, Mingqing Xiao, Cong Fang, Yue Dai, Chao Xu, and Zhouchen Lin.
\newblock Training neural networks by lifted proximal operator machines.
\newblock {\em IEEE Transactions on Pattern Analysis and Machine Intelligence},
  2020.

\bibitem{lillicrap2014}
Timothy~P Lillicrap, Daniel Cownden, Douglas~B Tweed, and Colin~J Akerman.
\newblock Random feedback weights support learning in deep neural networks.
\newblock {\em arXiv preprint arXiv:1411.0247}, 2014.

\bibitem{lillicrap2020NGRAD}
Timothy~P Lillicrap, Adam Santoro, Luke Marris, Colin~J Akerman, and Geoffrey
  Hinton.
\newblock Backpropagation and the brain.
\newblock {\em Nature Reviews Neuroscience}, 21(6):335--346, 2020.

\bibitem{movellan1991}
Javier~R Movellan.
\newblock Contrastive hebbian learning in the continuous hopfield model.
\newblock In {\em Connectionist models}, pages 10--17. Elsevier, 1991.

\bibitem{nokland2016}
Arild N\o{}kland.
\newblock Direct feedback alignment provides learning in deep neural networks.
\newblock In {\em Advances in Neural Information Processing Systems},
  volume~29, 2016.

\bibitem{nokland2019training}
Arild N{\o}kland and Lars~Hiller Eidnes.
\newblock Training neural networks with local error signals.
\newblock In {\em International conference on machine learning}, pages
  4839--4850. PMLR, 2019.

\bibitem{outrata1988note}
Ji{\v{r}}{\'\i}~V Outrata.
\newblock A note on the usage of nondifferentiable exact penalties in some
  special optimization problems.
\newblock {\em Kybernetika}, 24(4):251--258, 1988.

\bibitem{richards2019dendritic}
Blake~A Richards and Timothy~P Lillicrap.
\newblock Dendritic solutions to the credit assignment problem.
\newblock {\em Current opinion in neurobiology}, 54:28--36, 2019.

\bibitem{sacramento2018}
Jo{\~a}o Sacramento, Rui Ponte~Costa, Yoshua Bengio, and Walter Senn.
\newblock Dendritic cortical microcircuits approximate the backpropagation
  algorithm.
\newblock {\em Advances in neural information processing systems}, 31, 2018.

\bibitem{salvatori2023brain}
Tommaso Salvatori, Ankur Mali, Christopher~L Buckley, Thomas Lukasiewicz,
  Rajesh~PN Rao, Karl Friston, and Alexander Ororbia.
\newblock Brain-inspired computational intelligence via predictive coding.
\newblock {\em arXiv preprint arXiv:2308.07870}, 2023.

\bibitem{scellier2017}
Benjamin Scellier and Yoshua Bengio.
\newblock Equilibrium propagation: Bridging the gap between energy-based models
  and backpropagation.
\newblock {\em Frontiers in computational neuroscience}, 11:24, 2017.

\bibitem{scellier2023energybased}
Benjamin Scellier, Maxence Ernoult, Jack Kendall, and Suhas Kumar.
\newblock Energy-based learning algorithms for analog computing: a comparative
  study.
\newblock {\em arXiv preprint arXiv:2312.15103}, 2023.

\bibitem{stern2021supervised}
Menachem Stern, Daniel Hexner, Jason~W Rocks, and Andrea~J Liu.
\newblock Supervised learning in physical networks: From machine learning to
  learning machines.
\newblock {\em Physical Review X}, 11(2):021045, 2021.

\bibitem{tsodyks1998neural}
Misha Tsodyks, Klaus Pawelzik, and Henry Markram.
\newblock Neural networks with dynamic synapses.
\newblock {\em Neural computation}, 10(4):821--835, 1998.

\bibitem{whittington2017approximation}
James~CR Whittington and Rafal Bogacz.
\newblock An approximation of the error backpropagation algorithm in a
  predictive coding network with local hebbian synaptic plasticity.
\newblock {\em Neural computation}, 29(5):1229--1262, 2017.

\bibitem{xie2003}
Xiaohui Xie and H~Sebastian Seung.
\newblock Equivalence of backpropagation and contrastive hebbian learning in a
  layered network.
\newblock {\em Neural computation}, 15(2):441--454, 2003.

\bibitem{yi2022memristorcrossbars}
Su-in Yi, Jack~D Kendall, R~Stanley Williams, and Suhas Kumar.
\newblock Activity-difference training of deep neural networks using memristor
  crossbars.
\newblock {\em Nature Electronics}, pages 1--7, 2022.

\bibitem{zach2021}
Christopher Zach.
\newblock Bilevel programs meet deep learning: A unifying view on inference
  learning methods.
\newblock {\em arXiv preprint arXiv:2105.07231}, 2021.

\bibitem{zach2019}
Christopher Zach and Virginia Estellers.
\newblock Contrastive learning for lifted networks.
\newblock In Kirill Sidorov and Yulia Hicks, editors, {\em Proceedings of the
  British Machine Vision Conference (BMVC)}, pages 163.1--163.12. BMVA Press, 9
  2019.

\end{thebibliography}
}

\newpage
\appendix
\onecolumn

\section{Additional Results}\label{app:results}

Fig.~\ref{fig:extra_results} illustrate additional statistics on the alignment of back-propagation induced gradients (w.r.t.\ the network parameters) and the ones obtained by the proposed DP improvement.

\begin{figure}[htb]
     \centering
     \begin{subfigure}[b]{0.49\textwidth}
         \centering
         \includegraphics[width=\textwidth]{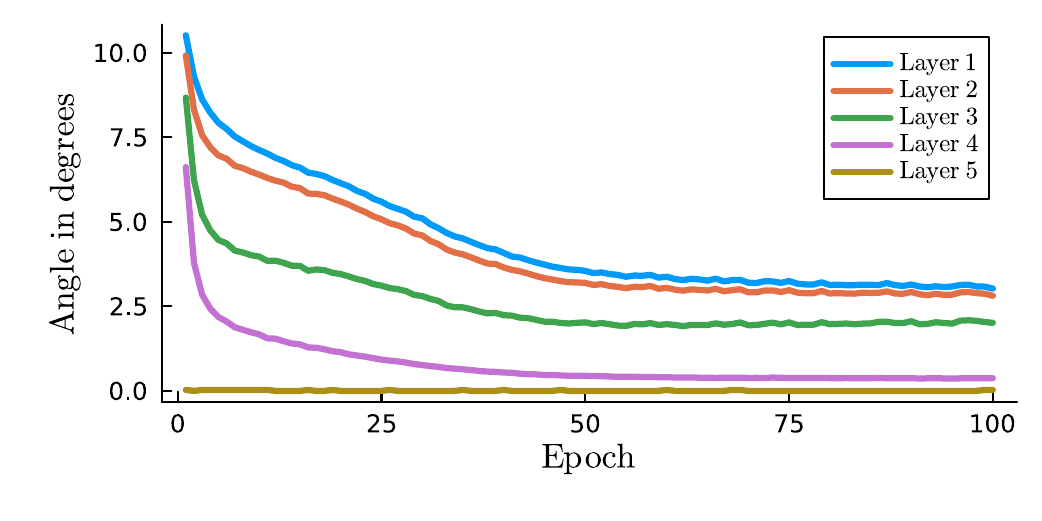}
         \caption{$\alpha=0$}
     \end{subfigure}
     \hfill
     \begin{subfigure}[b]{0.49\textwidth}
         \centering
         \includegraphics[width=\textwidth]{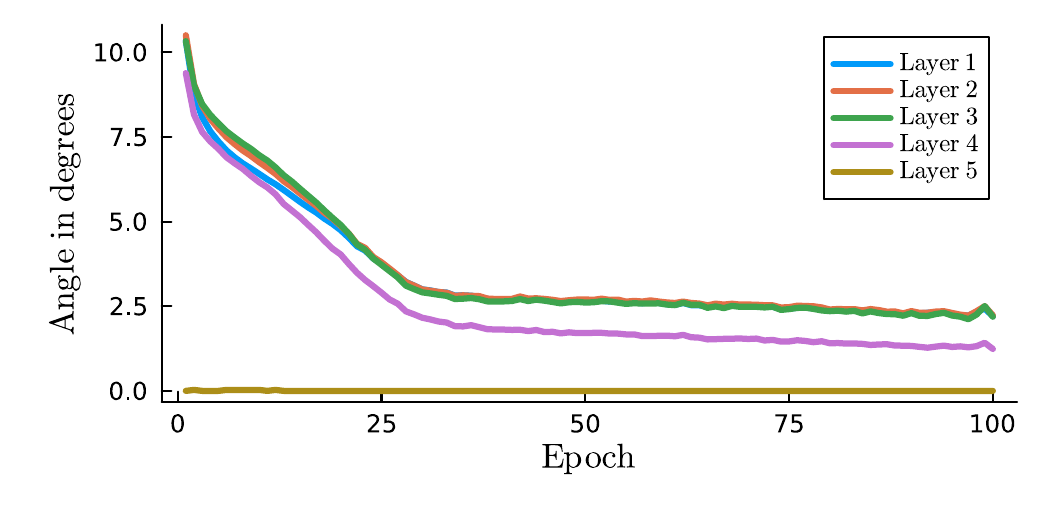}
         \caption{$\alpha=1$}
     \end{subfigure}
     \begin{subfigure}[b]{0.48\textwidth}
         \centering
         \includegraphics[width=\textwidth]{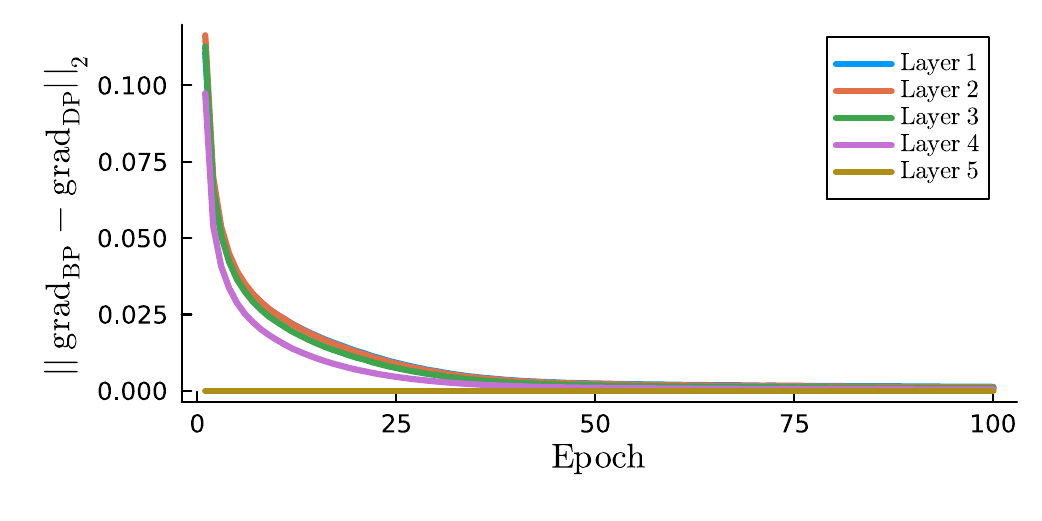}
         \caption{$\alpha=0.0$}
     \end{subfigure}
    \hfill
     \begin{subfigure}[b]{0.48\textwidth}
         \centering
         \includegraphics[width=\textwidth]{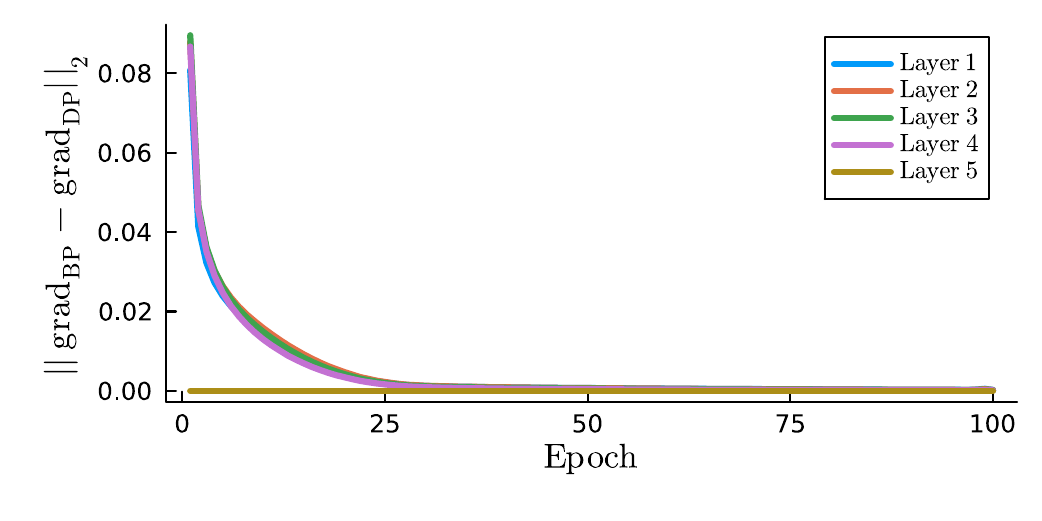}
         \caption{$\alpha=1.0$}
     \end{subfigure}
     \hfill
     \begin{subfigure}[b]{0.48\textwidth}
         \centering
         \includegraphics[width=\textwidth]{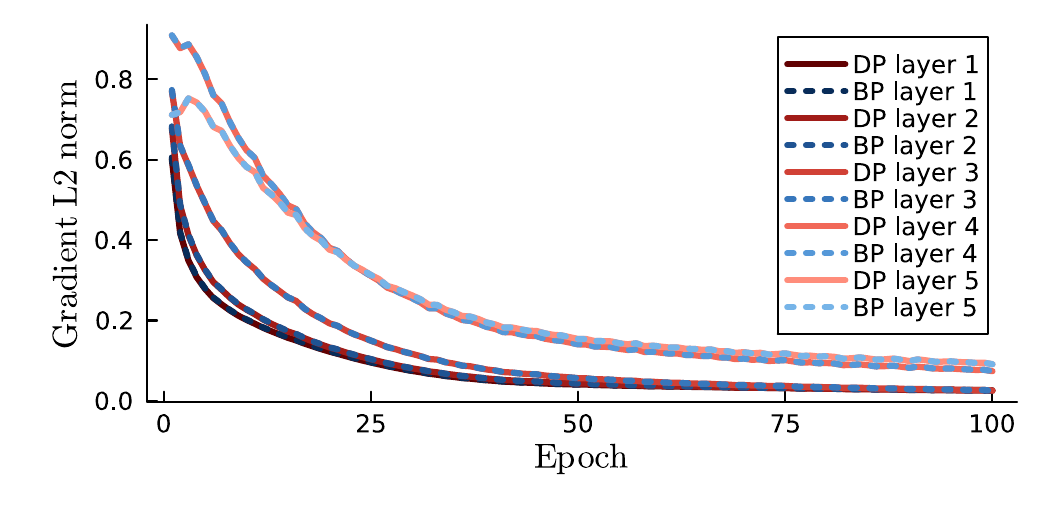}
         \caption{$\alpha=0.0$}
     \end{subfigure}
    \hfill
     \begin{subfigure}[b]{0.48\textwidth}
         \centering
         \includegraphics[width=\textwidth]{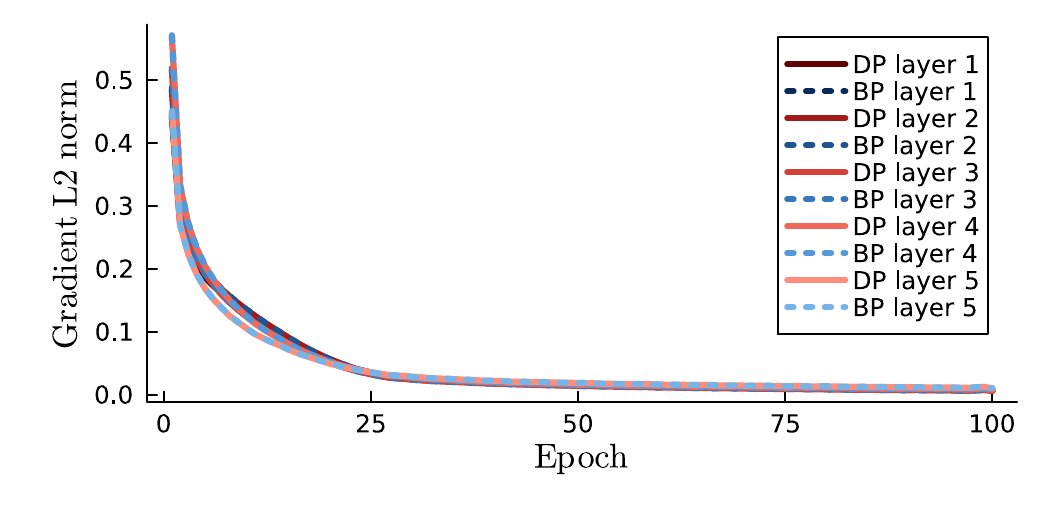}
         \caption{$\alpha=1.0$}
     \end{subfigure}
        \caption{
        MNIST experiments employing asymetric $\alpha$. Results are averaged over five random seeds.
        \textbf{Top:} Alignment between the parameter updates obtained with back-propagation and with the improved DP variant (using 30 inference iterations and asymmetric nudging with $\alpha\in\{0,1\}$). 
        \textbf{Middle:} L2 norm of difference between BP and DP gradients.
        \textbf{Bottom:} L2 norms of BP and DP gradients when using 30 inference iterations and asymmetric nudging ($\alpha=0.0$ and $\alpha=1.0$). Results are averaged over five random seeds.}
        \label{fig:extra_results}
\end{figure}

\subsection{Divergence of DP}
\label{sec:DP_failure}
The original dual propagation and the improved variant differ in the case $\alpha \in [0,1]$, $\alpha\ne 1/2$, with the boundary cases $\alpha\in \{0,1\}$ being of particular interest. The most efficient way to implement dual propagation is by sequentially updating neurons in every layer from input to output and then from output to input (akin to a forward and a backward traversal, which are nevertheless part of the sole inference phase using the same dynamics) before performing a weight update. 
However, to show the unstable behaviour of the original dual propagation formulation in the case of asymmetric nudging we instead perform repeated inference passes through the layers (which also better matches a continuous/asynchronous compute model).
Each pass (or iteration) corresponds to updating all neurons from input to output layer and back.

The results of these experiments are summarized in Tab~\ref{tab:results}, where we trained a 784-1000($\!\times\!$4)-10 MLP with ReLU activation functions on MNIST using the least-squares loss. The nudging strength $\beta$ is 1/2, which is also compatible with the original DP method.
We observe that inference in the original variant of DP diverges when applying asymmetric nudging and multiple inference iterations. This is not surprising as inference for dual propagation is only guaranteed to converge in the symmetric setting. The new variant of DP on the other hand yields stable inference results in all cases.

\section{CNN experimental details}\label{app:cnn_details}
The experiments of Section~\ref{sec:vgg16} were carried out with a VGG16 network and the following hyper parameters:
\begin{table}[h]
\small
\centering
\caption{Hyper parameters}
\begin{tabular}{l|ccc|ccc|c|c|c}
\hline
 & \multicolumn{3}{c|}{Epochs} & \multicolumn{3}{c|}{Learning rate} & \multirow{2}{*}{Momentum} & \multirow{2}{*}{Weight-decay} & \multirow{2}{*}{Batchsize} \\
Dataset  & Warmup & Decay & Total & Initial & Peak  & Final &     &      &     \\ \hline
CIFAR10  & 10            & 120          & 130   & 0.001   & 0.025 & 2e-6  & 0.9 & 5e-4 & 100 \\ \hline
CIFAR100 & 10            & 190          & 200   & 0.001   & 0.015 & 2e-6  & 0.9 & 5e-4 & 50  \\ \hline
Imagenet32x32 & 10       & 120          & 130   & 0.001   & 0.020 & 2e-6  & 0.9 & 5e-4 & 250 \\ \hline
\end{tabular}
\end{table}

\section{Proof of Proposition~\ref{prop:dual_prop}}
\begin{proof}
  The first-order optimality conditions for $s_k$ and $\delta_k$ ($k=1,\dotsc,L-1$) in Eq.~\ref{eq:Lagrangian} are given by
  \begin{align}
    \text{(I)~~~~} 0 = (f_k^{-1})'(s_k) \delta_k - W_k\tr \delta_{k+1} & & \text{(II)~~~~} 0 = f_k^{-1}(s_k) - W_{k-1} s_{k-1}.
  \end{align}
  Reparametrization in terms of $s_k^\pm$ (Eq.~\ref{eq:reparametrization}) and expanding $\text{(II)}+\alpha\text{(I)}$ and $\text{(II)}-\bar\alpha\text{(I)}$ yields
\begin{align}
\begin{split}
  0 &= \;\;\,f_k^{-1}(\alpha s_k^++\bar\alpha s_k^-) - a_k + \alpha (f_k^{-1})'(\bar s_k)\tr (s_k^+-s_k^-) - \alpha W_k\tr (s_{k+1}^+-s_{k+1}^-) \\
  0 &= -f_k^{-1}(\alpha s_k^++\bar\alpha s_k^-) + a_k + \bar\alpha (f_k^{-1})'(\bar s_k)\tr (s_k^+-s_k^-) - \bar\alpha W_k\tr (s_{k+1}^+-s_{k+1}^-),
\end{split}
\label{eq:zk_optimality_cond}
\end{align}
which are also the KKT conditions for $\cL^{DP+}_\alpha$ in Eq.~\ref{eq:new_Lagrangian}.
It remains to manipulate these to obtain the desired update equations.
Adding the equations above results in
\begin{align}
  0 &= (f_k^{-1})'(\bar s_k)\tr\!(s_k^+-s_k^-) - W_k\tr (s_{k+1}^+-s_{k+1}^-) .
\end{align}
Dual propagation is (via its use of finite differences) intrinsically linked to a (locally) linear assumption on $f_k$. Hence, we assume
    $f_k(a) = s_k^0 + D_k a + O(\norm{a-a_k}^2)$
with $s_k^0 = f_k(a_k) - D_k a_k$ and $D_k = f_k'(a_k)$. The local linear assumption allows us to neglect the higher order terms.
Consequently we also assume that $f_k^{-1}$ is locally linear and therefore $(f_k^{-1})'(\bar s_k) \approx D_k^{-1}$ is independent of $\bar s_k$. Hence, we arrive at
\begin{align}
  0 \approx D_k^{-\top} (s_k^+-s_k^-) - W_k\tr (s_{k+1}^+-s_{k+1}^-) \iff s_k^+ - s_k^- \approx D_k\tr W_k\tr (s_{k+1}^+-s_{k+1}^-).
\end{align}
We insert this into the second of the necessary optimality conditions~\eqref{eq:zk_optimality_cond} and obtain
\begin{align}
\begin{split}
    0 &= f_k^{-1}(\alpha s_k^++\bar\alpha s_k^-) - a_k = f_k^{-1}\big( s_k^- + \alpha (s_k^+-s_k^-) \big) - a_k \\
    &= f_k^{-1}\!\Paren{ s_k^- + \alpha D_k\tr W_k\tr (s_{k+1}^+-s_{k+1}^-) } - a_k
    = D_k^{-1}\!\Paren{ s_k^- - s_k^0 + \alpha D_k\tr W_k\tr (s_{k+1}^+-s_{k+1}^-) } - a_k.
\end{split} \nonumber
\end{align}
The last line is equivalent to
\begin{align}
\begin{split}
    s_k^- &= D_k a_k + s_k^0 - \alpha D_k\tr W_k\tr (s_{k+1}^+-s_{k+1}^-) 
    \approx f_k\!\Paren{ W_{k-1} \bar s_{k-1} - \alpha W_k\tr (s_{k+1}^+-s_{k+1}^-) }.
\end{split}
\label{eq:move_inside_fk}
\end{align}
Analogously, $s_k^+ \approx f_k(W_{k-1} \bar s_{k-1} + \bar\alpha W_k\tr (s_{k+1}^+-s_{k+1}^-))$.
In summary we obtain the update rules shown in~\eqref{eq:new_zk_updates}.
\end{proof}

\section{Proof of Proposition~\ref{thm:adv_CHL}}

For convenience we restate Prop.~\ref{thm:adv_CHL}:
\begin{proposition}
  Let $0 < \beta \le \beta'$ and $\alpha, \alpha'\in[0,1]$ with
  $\alpha\le\alpha'$. Then the following holds: (a)
  $J_{\alpha',\beta}^{\text{AROVR}}(\theta) \le
  J_{\alpha,\beta}^{\text{AROVR}}(\theta)$ and (b)
  $\beta J_{\alpha,\beta}^{\text{AROVR}}(\theta) \le \beta'
  J_{\alpha,\beta'}^{\text{AROVR}}(\theta)$.
\end{proposition}

This is a consequence of the following lemma.
\begin{lemma}
     Let
\begin{align}
  s_1 = \arg\min_s \alpha_1 f(z) + g(z) & & s_2 = \arg\min_s \alpha_2 f(z) + g(z)
\end{align}
for $\alpha_2 > \alpha_1$. Then $f(s_2) \le f(s_1)$. 
\end{lemma}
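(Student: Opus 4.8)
The plan is to run a standard ``exchange'' (or revealed-preference) argument: write down the two defining optimality inequalities for $s_1$ and $s_2$, add/chain them so that the $g$-terms cancel, and read off a sign condition on $f(s_1)-f(s_2)$ that is governed entirely by the sign of $\alpha_2-\alpha_1$. No convexity, smoothness, or uniqueness of the minimizers is needed; I only use that $s_1,s_2$ are genuine minimizers, so each is a feasible competitor in the other's optimality inequality. (I read the $\arg\min_s$ as a minimization over the displayed variable, with $s_1,s_2$ the resulting minimizers.)

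First I would record that, since $s_1$ minimizes $\alpha_1 f + g$, plugging in the competitor $s_2$ gives
\[
  \alpha_1 f(s_1) + g(s_1) \le \alpha_1 f(s_2) + g(s_2),
\]
and, symmetrically, since $s_2$ minimizes $\alpha_2 f + g$, plugging in $s_1$ gives
\[
  \alpha_2 f(s_2) + g(s_2) \le \alpha_2 f(s_1) + g(s_1).
\]

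Next I would isolate the common quantity $g(s_2)-g(s_1)$ in each line. Rearranging the first inequality yields $\alpha_1\bigl(f(s_1)-f(s_2)\bigr) \le g(s_2)-g(s_1)$, while rearranging the second yields $g(s_2)-g(s_1) \le \alpha_2\bigl(f(s_1)-f(s_2)\bigr)$. Chaining these two bounds eliminates $g$ entirely and leaves
\[
  \alpha_1\bigl(f(s_1)-f(s_2)\bigr) \le \alpha_2\bigl(f(s_1)-f(s_2)\bigr),
  \quad\text{i.e.}\quad
  (\alpha_2-\alpha_1)\bigl(f(s_1)-f(s_2)\bigr) \ge 0.
\]

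Finally, since $\alpha_2 > \alpha_1$ forces $\alpha_2-\alpha_1 > 0$, I may divide through by this positive factor to conclude $f(s_1)-f(s_2)\ge 0$, i.e.\ $f(s_2)\le f(s_1)$, as claimed. There is essentially no hard step here: the only point requiring care is the bookkeeping of signs when isolating $g(s_2)-g(s_1)$ and ensuring both optimality inequalities point the ``same way'' before chaining them; once that is set up correctly the monotonicity of $f$ at the minimizer in the weight $\alpha$ drops out immediately. This lemma then feeds directly into Proposition~\ref{thm:adv_CHL} by identifying $f$ with $\ell$ (up to the relevant $\bar\alpha$/$\beta$ scaling) and $g$ with the remaining energy terms.
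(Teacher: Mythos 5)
Your proof is correct and is essentially the paper's own argument: both start from the two optimality (exchange) inequalities and eliminate $g$, the only cosmetic difference being that you chain the two rearranged bounds on $g(s_2)-g(s_1)$ whereas the paper simply adds the two inequalities, which is algebraically the same step. No gap.
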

\begin{proof}
    Optimality of $s_{1,2}$ implies
\begin{align}
  \alpha_1 f(s_1) + g(s_1) &\le \alpha_1 f(s_2) + g(s_2) \nonumber \\
  \alpha_2 f(s_2) + g(s_2) &\le \alpha_2 f(s_1) + g(s_1) \nonumber.
\end{align}
Adding these inequalities yields
\begin{align}
  \alpha_1 f(s_1) + \alpha_2 f(s_2) \le \alpha_1 f(s_2) + \alpha_2 f(s_1)
  \iff (\alpha_2-\alpha_1) f(s_2) \le (\alpha_2-\alpha_1) f(s_1) ,
\end{align}
i.e.\ $f(s_2) \le f(s_1)$ since $\alpha_2-\alpha_1>0$.
\end{proof}

\begin{proof}[Proof of the proposition]
{\bf Claim (a)}
    In the following we absorb $1/\beta$ into $E$ and omit the dependency of $E$ on $\theta$ for notational brevity. $\alpha\le\alpha'$ implies $-\bar\alpha \le -\bar\alpha' \le 0 \le \alpha \le \alpha'$. Further, we define
    \begin{align}
    \begin{split}
        s_\alpha &:= \arg\min_s \alpha\ell(s) + E(s) \qquad s_{-\bar\alpha} := \arg\min_s -\bar\alpha\ell(s) + E(s) \\
        s_{\alpha'} &:= \arg\min_s \alpha'\ell(s) + E(s) \qquad s_{-\bar\alpha'} := \arg\min_s -\bar\alpha'\ell(s) + E(s).
    \end{split}
    \end{align}
    We identify $f(s) = \ell(s)$ and $g(s) = E(s,\theta)$ and apply the lemma repeatedly to deduce that
    \begin{align}
        \ell(s_{\alpha'}) \le \ell(s_{\alpha}) \le \ell(s_{-\bar\alpha'}) \le \ell(s_{-\bar\alpha}).
    \end{align}
    Now
    \begin{align}
        \begin{split}
            J_{\alpha',\beta}^{\text{AROVR}}(\theta) &= \underbrace{\alpha' \ell(s_{\alpha'}) + E(s_{\alpha'})}_{\le \alpha' \ell(s_{\alpha}) + E(s_{\alpha})}
            + \bar\alpha' \ell(s_{-\bar\alpha'}) - E(s_{-\bar\alpha'}) \\
            &\le \alpha' \ell(s_{\alpha}) + E(s_{\alpha}) + \bar\alpha' \ell(s_{-\bar\alpha'}) - E(s_{-\bar\alpha'}) \\
            &= \alpha \ell(s_{\alpha}) + E(s_{\alpha}) + \bar\alpha \ell(s_{-\bar\alpha'}) - E(s_{-\bar\alpha'})
            + (\alpha'-\alpha)\ell(s_{\alpha}) + (\bar\alpha'-\bar\alpha) \ell(s_{-\bar\alpha'}) \\
            &= \alpha \ell(s_{\alpha}) + E(s_{\alpha}) + \bar\alpha \ell(s_{-\bar\alpha'}) - E(s_{-\bar\alpha'})
            + (\alpha'-\alpha) \underbrace{ \big( \ell(s_{\alpha}) - \ell(s_{-\bar\alpha'}) \big) }_{\le 0} \\
            &\le \alpha \ell(s_{\alpha}) + E(s_{\alpha}) + \underbrace{\bar\alpha \ell(s_{-\bar\alpha'}) - E(s_{-\bar\alpha'})}_{\le \bar\alpha \ell(s_{-\bar\alpha}) - E(s_{-\bar\alpha})} \\
            &\le \alpha \ell(s_{\alpha}) + E(s_{\alpha}) + \bar\alpha \ell(s_{-\bar\alpha}) - E(s_{-\bar\alpha}) \\
            &= J_{\alpha,\beta}^{\text{AROVR}}(\theta),
        \end{split}
    \end{align}
    where we used the definition of $s_{\alpha}$ in the first inequality, $\ell(s_{\alpha})\le \ell(s_{-\bar\alpha'})$ (together with $\alpha'-\alpha\ge 0$) for the second one, and finally the definition of $s_{-\bar\alpha}$ to obtain the last inequality.

{\bf Claim (b)}
We proceed similar to above and define the non-negative quantities
\begin{align}
  \gamma := \alpha\beta & & \bar\gamma := \bar\alpha\beta & & \gamma' := \alpha\beta' & & \bar\gamma' := \bar\alpha\beta'.
\end{align}
The assumption $\beta\le\beta'$ implies
\begin{align}
  -\bar\gamma' \le -\bar\gamma \le 0 \le \gamma \le \gamma'.
\end{align}
We also introduce
\begin{align}
  \begin{split}
    s_\gamma &:= \arg\min_s \gamma\ell(s) + E(s) \qquad s_{-\bar\gamma} := \arg\min_s -\bar\gamma\ell(s) + E(s) \\
    s_{\gamma'} &:= \arg\min_s \gamma'\ell(s) + E(s) \qquad s_{-\bar\gamma'} := \arg\min_s -\bar\gamma'\ell(s) + E(s).
  \end{split}
\end{align}
From the lemma we conclude that
\begin{align}
  \ell(s_{\gamma'}) \le \ell(s_{\gamma}) \le \ell(s_{-\bar\gamma}) \le \ell(s_{-\bar\gamma'}).
\end{align}
\begin{align}
  \begin{split}
    \beta J_{\alpha,\beta}^{\text{AROVR}}(\theta) &= \gamma \ell(s_\gamma) + E(s_\gamma) + \bar\gamma \ell(s_{-\bar\gamma}) - E(s_{-\bar\gamma}) \\
    &\le \gamma \ell(s_{\gamma'}) + E(s_{\gamma'}) + \bar\gamma \ell(s_{-\bar\gamma}) - E(s_{-\bar\gamma}) \\
    &= \gamma' \ell(s_{\gamma'}) + E(s_{\gamma'}) + \bar\gamma' \ell(s_{-\bar\gamma}) - E(s_{-\bar\gamma})
    + (\gamma-\gamma') \ell(s_{\gamma'}) + (\bar\gamma-\bar\gamma') \ell(s_{-\bar\gamma})\\
    &= \gamma' \ell(s_{\gamma'}) + E(s_{\gamma'}) + \bar\gamma' \ell(s_{-\bar\gamma}) - E(s_{-\bar\gamma})
    + (\gamma-\gamma') \underbrace{\big( \ell(s_{\gamma'}) - \ell(s_{-\bar\gamma}) \big)}_{\le 0} \\
    &\le \gamma' \ell(s_{\gamma'}) + E(s_{\gamma'}) + \bar\gamma' \ell(s_{-\bar\gamma}) - E(s_{-\bar\gamma}) \\
    &\le \gamma' \ell(s_{\gamma'}) + E(s_{\gamma'}) + \bar\gamma' \ell(s_{-\bar\gamma'}) - E(s_{-\bar\gamma'}) \\
    &= \beta' J_{\alpha,\beta'}^{\text{AROVR}}(\theta).
  \end{split}
\end{align}
We applied the definition of $s_\gamma$ (first inequality),
$\ell(s_{\gamma'}) \le \ell(s_{-\bar\gamma})$ (2nd inequality) and the
definition of $s_{-\bar\gamma'}$.
\end{proof}

\section{Proof of Proposition~\ref{thm:adv_CHL_2}}

For convenience we restate Prop.~\ref{thm:adv_CHL_2}:
\begin{proposition}
  Let $0 < \beta' < \beta$ and $\alpha'$ such that
  $\bar\alpha\beta=\bar\alpha'\beta'$. Then
  $J_{\alpha,\beta}^{\text{AROVR}}(\theta) \le
  J_{\alpha',\beta'}^{\text{AROVR}}(\theta)$.
\end{proposition}
\begin{proof}
  Let $\gamma := \bar\alpha\beta = \bar\alpha'\beta'$. We further introduce
  \begin{align}
    \tilde E_{-\gamma}(s) := -\gamma\ell(s) + E(s)
  \end{align}
  and
  \begin{align}
    \begin{split}
    s_{\alpha\beta} &= \arg\min_s \alpha\ell(s) + \tfrac{1}{\beta} E(s) \\
    s_{\alpha'\beta'} &= \arg\min_s \alpha'\ell(s) + \tfrac{1}{\beta'} E(s)  \\
    s_{-\gamma} &= \arg\min_s -\bar\alpha\ell(s) + \tfrac{1}{\beta} E(s) = \arg\min_s \tilde E_{-\gamma}(s) .
    \end{split}
  \end{align}
  Now
  \begin{align}
    \begin{split}
      J_{\alpha,\beta}^{\text{AROVR}}(\theta) &= \alpha\ell(s_{\alpha\beta}) + \tfrac{1}{\beta} E(s_{\alpha\beta}) +
      \bar\alpha\ell(s_{-\gamma}) - \tfrac{1}{\beta} E(s_{-\gamma}) \\
      &\le \alpha\ell(s_{\alpha'\beta'}) + \tfrac{1}{\beta} E(s_{\alpha\beta}) + \bar\alpha\ell(s_{-\gamma}) - \tfrac{1}{\beta} E(s_{-\gamma}) \\
      &= \ell(s_{\alpha'\beta'}) - \bar\alpha \ell(s_{\alpha'\beta'}) + \tfrac{1}{\beta} E(s_{\alpha\beta}) +
      \bar\alpha\ell(s_{-\gamma}) - \tfrac{1}{\beta} E(s_{-\gamma}) \\
      &= \ell(s_{\alpha'\beta'}) + \tfrac{1}{\beta} \tilde E_{-\gamma}(s_{\alpha'\beta'}) - \tfrac{1}{\beta} \tilde E_{-\gamma}(s_{-\gamma}) \\
      &= \ell(s_{\alpha'\beta'}) + \tfrac{1}{\beta} \underbrace{\Paren{ \tilde E_{-\gamma}(s_{\alpha'\beta'}) - \tilde E_{-\gamma}(s_{-\gamma}) } }_{\ge 0} \\
      &\le \ell(s_{\alpha'\beta'}) + \tfrac{1}{\beta'} \Paren{ \tilde E_{-\gamma}(s_{\alpha'\beta'}) - \tilde E_{-\gamma}(s_{-\gamma}) } \\
      &= \alpha'\ell(s_{\alpha'\beta'}) + \tfrac{1}{\beta'} E(s_{\alpha'\beta'}) +
      \bar\alpha'\ell(s_{-\gamma}) - \tfrac{1}{\beta'} E(s_{-\gamma}) = J_{\alpha',\beta'}^{\text{AROVR}}(\theta),
    \end{split}
  \end{align}
  where we used the minimality of $s_{\alpha\beta}$ and $s_{-\gamma}$, and
  that $1/\beta < 1/\beta'$ to obtain the inequalities.
\end{proof}

\section{Stabilized Fixed-Point Iterations}
\label{sec:dampened_fixed_point}

The LPOM model considered in~\cite{li2020} uses specific layerwise potentials $E_k$ and also layer-dependent values of $\beta$,
\begin{align}
  \cL^{\text{LPOM}}(\theta) = \min_{\{s_k\}} \ell(s_L) + \sum_k \tfrac{1}{\beta_k} \Paren{ G_k(s_k) - s_k\tr W_{k-1}s_{k-1} + G_k^*(W_{k-1}s_{k-1}) },
\end{align}
where we use $s_k$ instead of $s_k^+$ as the $s_k^-$ are already eliminated. The LPOM authors suggest fixed-point iterations of the form
\begin{align}
  s_k^{(t)} \gets f_k \Paren{ W_{k-1} s_{k-1} + \tfrac{\beta_k}{\beta_{k+1}} W_k\tr \Paren{ s_{k+1} - f_{k+1}(W_ks_k^{(t-1)}) } } ,
  \label{eq:LPOM_updates}
\end{align}
which can be derived from the stationarity condition w.r.t.\ $s_k$,
\begin{align}
  \begin{split}
    0 &= \tfrac{1}{\beta_k} \Paren{ \nabla G_k(s_k) - W_{k-1}s_{k-1} } + \tfrac{1}{\beta_{k+1}} \Paren{ W_k\tr \nabla G_{k+1}^*(W_ks_k) - W_k\tr s_{k+1} } \\
    &= \tfrac{1}{\beta_k} \Paren{ f_k^{-1}(s_k) - W_{k-1}s_{k-1} } + \tfrac{1}{\beta_{k+1}} W_k\tr \!\Paren{ f_{k+1} (W_ks_k) - s_{k+1} }.
  \end{split}
\end{align}
In~\cite{li2020} it is assumed that each $f_k$ is Lipschitz-continuous
(w.l.o.g.\ 1-Lipschitz continuous) and that $\{\beta_k\}_k$ are chosen such that
\begin{align}
  \tfrac{\beta_k}{\beta_{k+1}} \norm{W_k\tr W_K}_2 < 1,
\end{align}
i.e.\ $\beta_k \ll \beta_{k+1}$ and thereby introducing discounting of later
layers (and enabling only weak feedback from the target loss $\ell$). In our
setting we choose $\beta_0=\cdots=\beta_{L-1}=1$ and $\beta_L=\beta$. Hence,
the scheme in Eq.~\ref{eq:LPOM_updates} is only guaranteed to converge as long
as $\norm{W_k\tr W_k}_2 < 1$. For many standard activation functions (in
particular ReLU, leaky ReLU and hard sigmoid) the underlying $G_k$ is of the
form $G_k(s_k) = \norm{s_k}^2/2 + \imath_C(s_k)$ for a convex set $C$, where
$\imath_C(s_k)$ is the functional form of the constraint $s_k\in C$. Therefore
it is convenient to add a quadratic dampening term when optimizing w.r.t.\
$s_k$. Let
\begin{align}
  \Phi_k(s_k;s_{k-1},s_{k+1},\theta) &:= G_k(s_k) - s_k\tr W_{k-1}s_{k-1} + G_{k+1}^*(W_ks_k) - s_{k+1}\tr W_ks_k \\
  \tilde\Phi_k (s_k;s_k^0,s_{k-1},s_{k+1},\theta) &:= G_k(s_k) - s_k\tr W_{k-1}s_{k-1} + (s_k-s_k^0)\tr W_k\tr f_{k+1}(W_ks_k^0) - s_{k+1}\tr W_ks_k
\end{align}
be the terms in $\cL^{\text{LPOM}}$ dependent on $s_k$ and its (partially)
linearized surrogate, respectively. Now consider the iteration
\begin{align}
  \label{eq:dampened_iteration}
  \begin{split}
    s_k^{(t+1)} &:= \arg\min_{s_k} \tilde\Phi_k(s_k;s_k^{(t)},s_{k-1},s_{k+1},\theta) + \tfrac{L}{2} \norm{s_k-s_k^0}^2\\
    &= \Pi_C\!\Paren{ \frac{W_{k-1}s_{k-1} + W_k\tr \Paren{ s_{k+1} - f_{k+1}(W_ks_k^{(t-1)}) } + Ls_k^{(t)}}{1+L} }.
  \end{split}
\end{align}
If $s_k^{(t)}$ is a stationary point of $\tilde\Phi_k$, then
$s_k^{(t+1)}=s_k^{(t)}$ is also a stationary point of $\Phi_k$. If
$G_{k+1}$ is 1-strongly convex (and therefore $\nabla G_{k+1}=f_{k+1}$ is
1-Lipschitz continuous) and $L \ge \norm{W_k\tr W_k}_2$, then
$\tilde\Phi_k(\cdot; s_k^0,s_{k-1},s_{k+1},\theta) + \frac{L}{2}
\norm{\cdot-s_k^0}^2$ is a majorizer of $\Phi_k$, and the sequence
$(s_k^{(t)})_t$ decreases monotonically $\Phi_k$. It can be further shown that
Eq.~\ref{eq:dampened_iteration} is a contraction if
$L > \max(0, \norm{W\tr W}_2 - 1)/2$.

We use the stabilized fixed-point scheme in Eq.~\ref{eq:dampened_iteration}
determine the neural states $s_k$ in the setting $\alpha\in\{0,1\}$ (i.e.\ the
LPOM and inverted LPOM formulations), where we assign $L$ to the estimate of
$\norm{W_k\tr W_k}_2$ obtained by 5~power iterations.

\section{Analysis of the fixed point iterations}
\label{sec:fixed_point_analysis}

In the following we use an idealized setup, where we make the following assumptions:
\begin{enumerate}
\item We consider a trilevel program with a hidden and an output layer,
\item and we use second-order Taylor expansion for the relevant mappings.
\end{enumerate}
The trilevel program is given by
\begin{align}
  \ell(y^*) \qquad \text{s.t. } y^* = \arg\min_y F(y) - y\tr Wz^* \qquad z^* = \arg\min_z G(z) - z\tr x
  \label{eq:trilevel}
\end{align}
and $\ell$, $F$ and $G$ are given by
\begin{align}
  \label{eq:taylor}
  \ell(y) = \tfrac{1}{2} y\tr H_\ell y + b_\ell\tr y & & F(y) = \tfrac{1}{2} y\tr H_F y + b_F\tr y
  & & G(z) = \tfrac{1}{2} z\tr H_G z + b_G\tr z.
\end{align}
In the following subsection we derive the closed form expression for the
update of $z^\pm$ in case of the AROVR and SPROVR relaxations.

\subsection{Fixed-point updates based on AROVR}

Applying the AROVR step on the outermost level in Eq.~\ref{eq:trilevel} yields
\begin{align}
  U(y^+,y^-,z^+,z^-) = \alpha\ell(y^+) + \bar\alpha\ell(y^-) + F(y^+) - F(y^-) - (y^+\!-\!y^-)\tr W\bar z + G(z^+) - G(z^-) - (z^+\!-\!z^-)\tr x.
\end{align}
$\bar z=\alpha' z^+ + \bar\alpha' z^-$ uses a potentially different
coefficient $\alpha'$.  By using the assumption in Eq.~\ref{eq:taylor},
$y^\pm$ are given by
\begin{align}
  \begin{split}
    0 &= \alpha (H_\ell y^+ + b_\ell) + H_F y^+ + b_F - W z  \quad\implies  y^+ = (H_F + \alpha H_\ell)^{-1} (Wz - b_F - \alpha b_\ell) \\
    0 &= -\bar\alpha (H_\ell y^- + b_\ell) + H_F y^- + b_F - W z  \;\implies  y^- = (H_F - \bar\alpha H_\ell)^{-1} (Wz - b_F + \bar\alpha b_\ell).
  \end{split}
\end{align}
Consequently,
\begin{align}
  \begin{split}
  y^+-y^- &= (H_F + \alpha H_\ell)^{-1} (Wz - b_F - \alpha b_\ell) - (H_F - \bar\alpha H_\ell)^{-1} (Wz - b_F + \bar\alpha b_\ell) \\
  {} &= \Paren{ (H_F + \alpha H_\ell)^{-1} - (H_F - \bar\alpha H_\ell)^{-1} } (Wz - b_F)
       - \Paren{ \alpha (H_F + \alpha H_\ell)^{-1} - \bar\alpha (H_F - \bar\alpha H_\ell)^{-1} } b_\ell .
  \end{split}
\end{align}
Only the term involving $Wz$ is of interest as the remaining ones are
independent of $z$. Now define
\begin{align}
  M := (H_F + \alpha H_\ell)^{-1} - (H_F - \bar\alpha H_\ell)^{-1} ,
\end{align}
then
\begin{align}
  \begin{split}
    & (H_F + \alpha H_\ell) M = \mI - (H_F + \alpha H_\ell) (H_F - \bar\alpha H_\ell)^{-1} \\
    \iff & (H_F + \alpha H_\ell) M (H_F - \bar\alpha H_\ell) = (H_F - \bar\alpha H_\ell) - (H_F + \alpha H_\ell) = -H_\ell .
  \end{split}
\end{align}
or
\begin{align}
  M = -(H_F + \alpha H_\ell)^{-1} H_\ell (H_F - \bar\alpha H_\ell)^{-1} .
\end{align}
A similar calculation shows that
\begin{align}
  M' &:= \alpha (H_F + \alpha H_\ell)^{-1} - \bar\alpha (H_F - \bar\alpha H_\ell)^{-1} \nonumber \\
     &= (H_F + \alpha H_\ell)^{-1} \big( (1-2\alpha) H_F - 2\alpha\bar\alpha H_\ell \big) (H_F - \bar\alpha H_\ell)^{-1} \nonumber.
\end{align}
Finally,
\begin{align}
  \begin{split}
  y^+-y^- &= M (Wz - b_F) - M' b_\ell \\
  {} &= (H_F + \alpha H_\ell)^{-1} \Paren{ H_\ell (H_F - \bar\alpha H_\ell)^{-1} (b_F - Wz)
       - \big( (1-2\alpha) H_F - 2\alpha\bar\alpha H_\ell \big) (H_F - \bar\alpha H_\ell)^{-1} b_\ell } .
  \end{split}
\end{align}
With $H_F=\mI$ and $H_\ell=\beta \mI$ we obtain
\begin{align}
  M = -\frac{\beta}{(1+\alpha\beta)(1-\bar\alpha\beta)} \,\mI .
  \label{eq:M_example_AROVR}
\end{align}
Using the quadratic approximation for $G$,
$G(z) = \frac{1}{2} z\tr H_G z + b_G\tr z$, then
\begin{align}
  \begin{split}
  z^+ &\gets H_G^{-1}\Paren{ x - b_G + \alpha'W\tr (y^+-y^-) } = g\!\Paren{ x + \alpha'W\tr (y^+-y^-) } \\
  z^- &\gets H_G^{-1}\Paren{ x - b_G - \bar\alpha'W\tr (y^+-y^-) } = g\!\Paren{ x - \bar\alpha'W\tr (y^+-y^-) } 
  \end{split}
\end{align}
and
\begin{align}
  \begin{split}
  \alpha'z^+ + \bar\alpha'z^- &\gets H_G^{-1}\Paren{ x - b_G + \Paren{ (\alpha')^2 - (\bar\alpha')^2 } W\tr (y^+-y^-) }  \\
  {} &= H_G^{-1}\Paren{ x - b_G + (2\alpha' - 1) W\tr (y^+-y^-) } \\
  {} &= g\!\Paren{ x + (2\alpha' - 1) W\tr (y^+-y^-) } \\
  {} &= g\!\Paren{ x + (2\alpha' - 1) W\tr (M W \bar z + v) }
  \end{split}
\end{align}
for a vector $v$ containing all terms in $y^+-y^-$ not dependent on
$\bar z$. The choice $\alpha'=1/2$ therefore implies that $\bar z$ is a fixed
point of the above relation.

\subsection{Fixed-point updates based on SPROVR}

The above reasoning applies to the AROVR formulation. The SPROVR model uses a different objective,
\begin{align}
  U(y^+,y^-,z^+,z^-) = \ell(\alpha y^++\bar\alpha y^-) + F(y^+) - F(y^-) - (y^+-y^-)\tr W\bar z + G(z^+) - G(z^-) - (z^+-z^-)\tr x
\end{align}
with corresponding stationary conditions
\begin{align}
  \alpha\ell'(\bar y) + \nabla F(y^+) - W\bar z = 0 & & \bar\alpha\ell'(\bar y) - \nabla F(y^-) + W\bar z = 0 .
\end{align}
By using a quadratic model for $\ell$ and $F$, this translates to
\begin{align}
  (I) \quad \alpha \Paren{ H_\ell \bar y + b_\ell } + H_F y^+ + b_F - W\bar z = 0
  & & (II) \quad \bar\alpha \Paren{ H_\ell \bar y + b_\ell } - H_F y^- - b_F + W\bar z = 0.
\end{align}
Adding these relation yields
\begin{align}
  H_\ell \bar y + b_\ell + H_F(y^+-y^-) = 0 \iff y^+-y^- = -H_F^{-1} \Paren{ H_\ell \bar y + b_\ell }.
\end{align}
Further, $\alpha (I) - \bar\alpha (II)$ results in
\begin{align}
  \begin{split}
  0 &= (\alpha^2-\bar\alpha^2) H_\ell \bar y + (\alpha^2-\bar\alpha^2) b_\ell + H_F \bar y + b_F - W\bar z  \\
    &= \Paren{ (2\alpha-1) H_\ell + H_F} \bar y + (2\alpha-1) b_\ell + b_F - W\bar z  \\
  \iff \bar y &= \Paren{ (2\alpha-1) H_\ell + H_F}^{-1} \Paren{ W\bar z - (2\alpha-1) b_\ell - b_F }
  \end{split}
\end{align}
and
\begin{align}
  y^+-y^- &= -H_F^{-1} \Paren{ H_\ell \Paren{ (2\alpha-1) H_\ell + H_F}^{-1} \Paren{ W\bar z - (2\alpha-1) b_\ell - b_F } + b_\ell }.
\end{align}
Let $M$ be the matrix applied on $W\bar z$, i.e.
\begin{align}
  M = -H_F^{-1} H_\ell \Paren{ (2\alpha-1) H_\ell + H_F}^{-1} ,
\end{align}
then for $H_F=\mI$ and $H_\ell=\beta \mI$ we obtain
\begin{align}
  M = -\frac{\beta}{1+(2\alpha-1)\beta} \,\mI = -\frac{\beta}{1 - \beta + 2\alpha\beta} \,\mI .
\end{align}
Recall the value of $M$ before (Eq.~\ref{eq:M_example_AROVR}). As
\begin{align}
  (1+\alpha\beta)(1-\bar\alpha\beta) &= 1-\beta+2\alpha\beta - \alpha\bar\alpha\beta^2 \nonumber
\end{align}
and $\alpha\bar\alpha\beta^2\ge 0$, we conclude that the SPROVR leads to a
dampened error signal compared to AROVR (which is intuitively expected).
The updates for $z^\pm$ are given by
\begin{align}
  \begin{split}
  z^+ &\gets H_G^{-1}\Paren{ x - b_G + \alpha'W\tr (y^+-y^-) } = g\!\Paren{ x + \alpha'W\tr (M W\bar z + v) } \\
  z^- &\gets H_G^{-1}\Paren{ x - b_G - \bar\alpha'W\tr (y^+-y^-) } = g\!\Paren{ x - \bar\alpha'W\tr (M W\bar z + v) } 
  \end{split}
\end{align}
for a suitable vector $v$ independent of $\bar z$. Finally,
\begin{align}
  \begin{split}
  \alpha'z^+ + \bar\alpha'z^- &\gets H_G^{-1}\Paren{ x - b_G + \Paren{ (\alpha')^2 - (\bar\alpha')^2 } W\tr (y^+-y^-) }  \\
  {} &= H_G^{-1}\Paren{ x - b_G + (2\alpha' - 1) W\tr (y^+-y^-) } \\
  {} &= g\!\Paren{ x + (2\alpha' - 1) W\tr (y^+-y^-) } \\
  {} &= g\!\Paren{ x + (2\alpha' - 1) W\tr (M W \bar z + v) },
  \end{split}
\end{align}
and we conclude that $\bar z$ is again a fixed point for the updates when
$\alpha'=1/2$.

\end{document}